\definecolor{codegreen}{rgb}{0,0.6,0}
\definecolor{codegray}{rgb}{0.5,0.5,0.5}
\definecolor{codepurple}{rgb}{0.58,0,0.82}
\definecolor{backcolour}{rgb}{0.95,0.95,0.92}
\lstdefinestyle{mystyle}{
    backgroundcolor=\color{backcolour},   
    commentstyle=\color{codegreen},
    keywordstyle=\color{magenta},
    numberstyle=\tiny\color{codegray},
    stringstyle=\color{codepurple},
    basicstyle=\ttfamily\scriptsize,
    breakatwhitespace=false,         
    breaklines=true,                 
    captionpos=b,                    
    keepspaces=true,                 
    numbers=left,                    
    numbersep=5pt,                  
    showspaces=false,                
    showstringspaces=false,
    showtabs=false,                  
    tabsize=2
}
\newcommand{\BlackBox}{\rule{1.5ex}{1.5ex}}  
\newenvironment{proof}{\par\noindent{\bf Proof\ }}{\hfill\BlackBox\\[2mm]}
\newtheorem{theorem}{Theorem}
\title{Equilibrium Aggregation: Encoding Sets via Optimization}
\author[1,2,*]{\href{mailto:<sbos.net@gmail.com>?Subject=Equilibrium Aggregation}{Sergey Bartunov}{}}
\author[1,*]{Fabian B. Fuchs}
\author[1]{Timothy P. Lillicrap}
\affil[1]{%
    DeepMind\\
    London, United Kingdom
}
\affil[2]{%
    Now at CHARM Therapeutics\\
    London, United Kingdom\\
    \vspace{3mm}
}
\affil[*]{%
    Joint first authorship
}
\begin{document}
\maketitle

\begin{abstract}
   Processing sets or other unordered, potentially variable-sized inputs in neural networks is usually handled by \emph{aggregating} a number of input tensors into a single representation.
  While a number of aggregation methods already exist from simple sum pooling to multi-head attention, they are limited in their representational power both from theoretical and empirical perspectives.
  On the search of a principally more powerful aggregation strategy, we propose an optimization-based method called Equilibrium Aggregation.
  We show that many existing aggregation methods can be recovered as special cases of Equilibrium Aggregation and that it is provably more efficient in some important cases.
  Equilibrium Aggregation can be used as a drop-in replacement in many existing architectures and applications. We validate its efficiency on three different tasks: median estimation, class counting, and molecular property prediction. 
  In all experiments, Equilibrium Aggregation achieves higher performance than the other aggregation techniques we test.
\end{abstract}

\section{Introduction}\label{sec:introduction}

Early neural networks research focused on processing fixed-dimensional vector inputs. Since then, advanced architectures have been developed for processing fixed-dimensional data efficiently and effectively. 
This format, however, is not natural for applications where inputs do not have a fixed dimensionality, are unordered, or have both of these properties.
A strikingly successful strategy for tackling this issue has been to process such inputs with a series of aggregation $\rightarrow$ transformation operations. 

\begin{figure}
    \centering
    \includegraphics[width=0.9\linewidth]{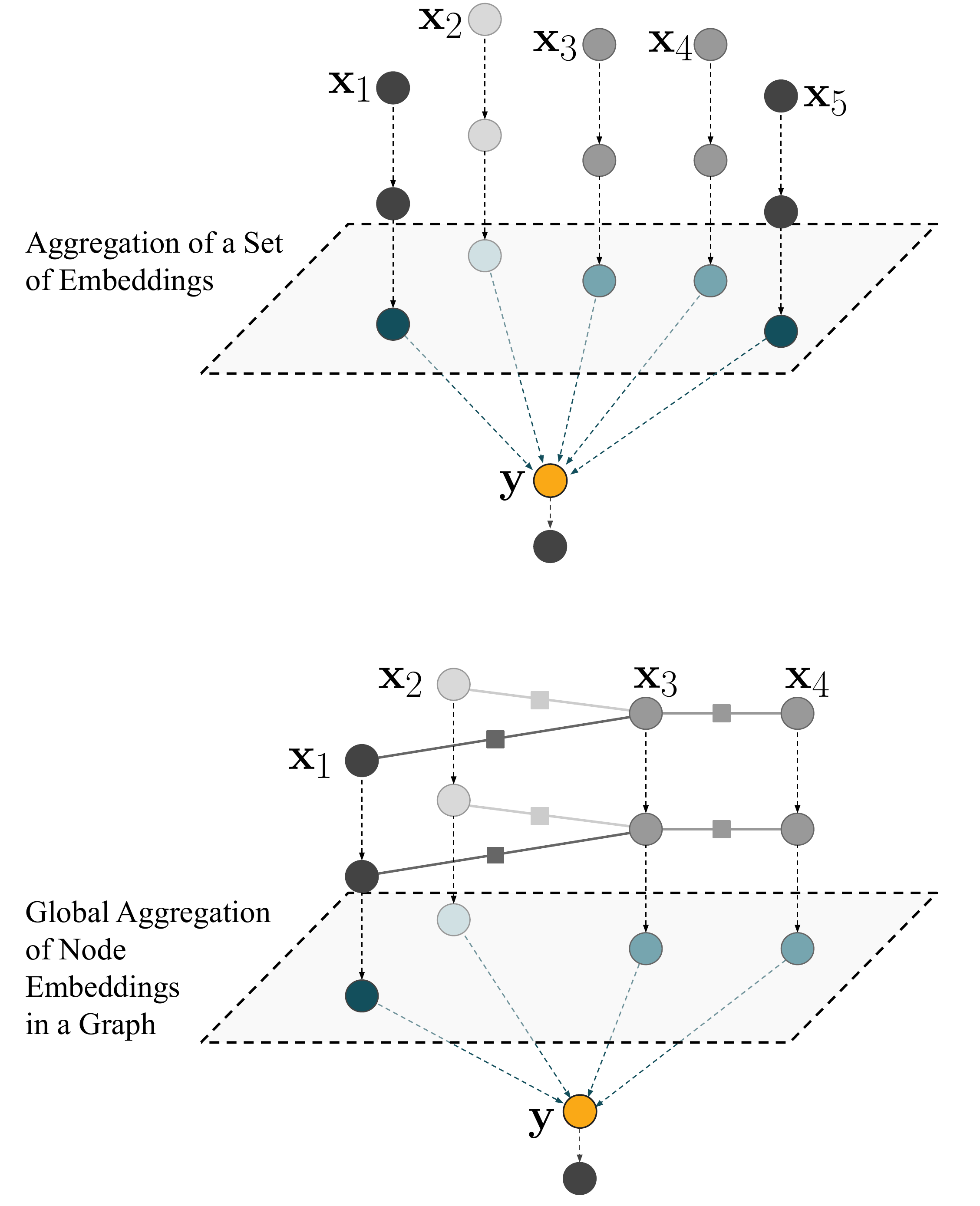}
    \caption{Global aggregation layers in typical neural networks for sets (top) and graphs (bottom). \textbf{Top}: each input set element $\mathbf{x}_i$ is first processed individually before being pooled into a global representation $\mathbf{y}$. This is followed by a final transformation block. \textbf{Bottom}: for graph data, the first part of the network is replaced by a graph or message passing neural network, but the global aggregation step is similar. In both cases, the global aggregation step drastically reduces the number of embeddings from many to one, rendering the right choice of aggregation technique critical for good model performance. The aggregation layer is typically implemented using sum-, max-, or attention-pooling. We propose a new aggregation mechanism, called Equilibrium Aggregation.}
    \label{fig:set_and_graph_aggregation}
\end{figure}

An \emph{aggregation} operation compresses a set
of input tensors into a single representation of a known, predefined dimensionality that can be then further sent to the downstream \emph{transformation} block. 
Since the latter deals with fixed-dimensional inputs with a defined ordering, it can profit from the variety of techniques available for vector-to-vector computations.

This pattern can be seen in many architectures. 
For instance, Deep Sets~\citep{DeepSets} builds a representation of a set of objects by first transforming each object and then summing their embeddings.
Similarly, Graph Neural Networks~\citep{kipf2016semi, battaglia2018relational} use a message-passing mechanism, which amounts to aggregating the set of input messages received by each node from its neighbours and then transforming the aggregate into a new message on the next layer (local aggregation). In many cases, several message passing layers are then followed by a global aggregation layer, where all node embeddings are aggregated into one global embedding vector describing the entire graph.
Finally, Transformers~\citep{vaswani2017attention} use self-attention, a mechanism that allows each object in the input set to interact with every other object and update its embedding by aggregating value embeddings from the rest of the set.

Mathematically, the aggregation $\phi(X) = \mathbf{y}$ compresses the input set $X = \{ \mathbf{x}_1, \mathbf{x}_2, \ldots, \mathbf{x}_N \} \in 2^{\mathcal{X}}$ into a $D$-dimensional vector $\mathbf{y} \in \mathbb{R}^D$.
In the case of Deep Sets~\citep{DeepSets} with sum aggregation, this reads
\begin{equation}\label{eq:sum_aggregation}
    \phi(X) =  \rho(\sum_{i=1}^N f(\mathbf{x}_i)),
\end{equation}
where $f$ and $\rho$ are the optional input and output transformations, respectively.

Besides yielding a fixed-dimensional output embedding, \eqref{eq:sum_aggregation} enforces an important inductive bias: permutation invariance. Global properties of sets or graphs (such as the free energy of a molecule) are independent of the ordering of the set elements. Taking advantage of such \textit{task symmetries} \citep{Mallat_2016} can add robustness guarantees with respect to important classes of input transformations, and is known to help generalisation performance~\citep{Worrall2017,WeilerGWBC18,Winkels2018}. Other ways of incorporating permutation invariance are max-pooling, mean-pooling or attention aggregators~\citep{kipf2016semi,battaglia2018relational,vaswani2017attention,Velickovic2018}.\footnote{Interestingly, even though in the case of Transformers for natural language processing the input is an ordered sequence, it appears beneficial to model the data as an order-independent set (or fully connected graph) with the sequential structure added via positional encodings.}

However, it is exactly these aggregation functions which often introduce a bottleneck in the information flow \citep{DeepSets,Wagstaff2019,cai2020note,chen2020measuring, Wagstaff2021}.
It is easy to see that sum aggregation may struggle to selectively extract relevant information from individual inputs or subsets and while methods like multi-head attention (effectively amounting to weighted mean per each head) partially address this issue, we believe there is a fundamental need for more expressive aggregation mechanisms.

Motivated by this need, we develop a method called Equilibrium Aggregation which is a generalization over existing pooling-based aggregation methods and can be obtained as an implicit solution to optimization-based formulation of aggregation. 
We further investigate its theoretical properties and show that not only it is a universal approximator of set functions but that it is also provably more expressive than sum or max aggregation in some cases.
Finally, we validate our insights empirically on a series of experiments where Equilibrium Aggregation demonstrates its practical effectiveness. 

\section{Equilibrium aggregation}\label{sec:equilibrium_aggregation}

\begin{figure}
    \centering
    \includegraphics[width=0.94\linewidth]{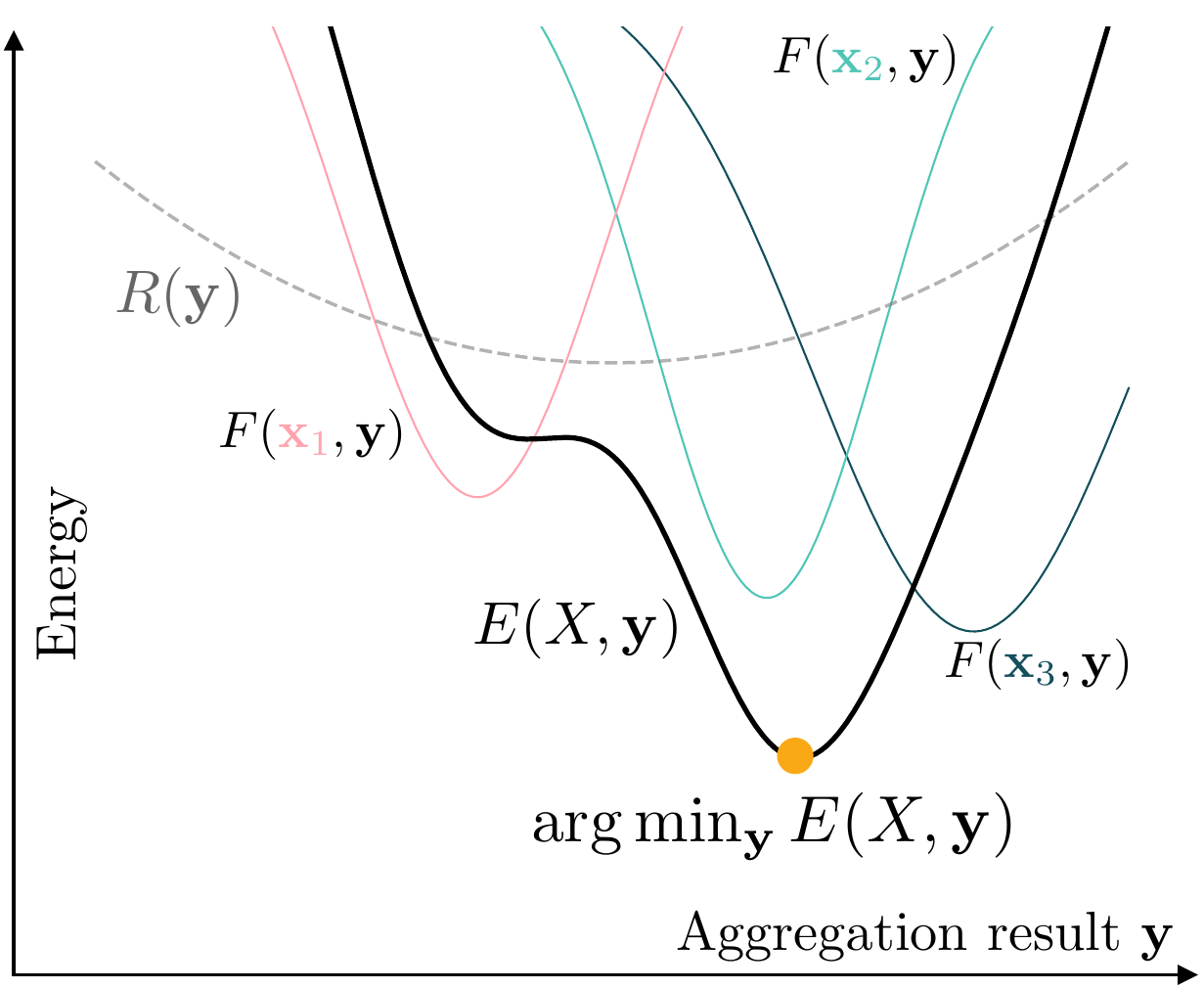}
    \caption{Schematic illustration of Equilibrium Aggregation. Each input $\mathbf{x} \in X$ contributes a potential value $F(\mathbf{x}, \mathbf{y})$ which are summed over the set $X$ and, together with the regularizer $R(\mathbf{y})$, form the total energy. Equilibrium Aggregation seeks to minimize this energy and the found minimum serves as the aggregation result.}
    \label{fig:equilibrium_aggregation}
\end{figure}

Our insight for developing better aggregation functions is grounded in the fact that the standard, pooling-based aggregation methods can be recovered as solutions to a certain optimization problem:
\begin{equation}\label{eq:aggregation_optimization}
    \phi(X) = \arg \min_{\mathbf{y}} \sum_{i=1}^N F(\mathbf{x}_i, \mathbf{y}),
\end{equation}
where $F(\mathbf{x}, \mathbf{y})$ is a \emph{potential} function. 

For example, with $F(\mathbf{x}, \mathbf{y}) = (\mathbf{x} - \mathbf{y})^2 $ (and assuming $\mathcal{X} = \mathbb{R}$), one obtains the \emph{mean aggregation} $\phi(X) = {1 \over N} \sum_{i=1}^N \mathbf{x}_i$, more examples can be found in Table~\ref{tab:aggregation}.
A natural question following this observation arises: can a more interesting aggregation strategy be induced by other choices of the potential function $F(\mathbf{x}, \mathbf{y})$?

We propose a method called Equilibrium Aggregation that addresses this question by letting the potential be a learnable neural network $F_{\theta}(\mathbf{x}, \mathbf{y})$ parameterized by $\theta$ which takes a set element $\mathbf{x}$ and the aggregation result $\mathbf{y} \in \mathcal{Y} = \mathbb{R}^M$ as an input and outputs a non-negative real scalar expressing the degree of ``disagreement'' between the two inputs.
By also adding a regularization term, we obtain the energy-minimization equation for Equilibrium Aggregation:
\begin{align}
    \phi_{\theta}(X) &= \arg \min_{\mathbf{y}} E_{\theta}(X, \mathbf{y}), \nonumber \\
    E_{\theta}(X, \mathbf{y}) &= R_{\theta}(\mathbf{y}) + \sum_{i=1}^N F_{\theta}(\mathbf{x}_i, \mathbf{y}), \label{eq:equilibrium_aggregation}
\end{align}
where for the scope of the paper the regularizer is simply $R_{\theta}(\mathbf{y}) = \text{softplus}(\lambda) \cdot || \mathbf{y} ||_2^2$. A graphical illustration for this construction can be found on Figure~\ref{fig:equilibrium_aggregation}.

Interestingly, this makes the result of the aggregation $\mathbf{y}$ be defined \emph{implicitly} and generally not available as a closed-form expression. 
Instead, one can find $\mathbf{y}$ by numerically solving the optimization problem~\eqref{eq:equilibrium_aggregation}, e.g., by gradient descent:
\begin{equation}\label{eq:gradient_descent}
    \mathbf{y}^{(t+1)} = \mathbf{y}^{t} - \alpha \nabla_{\mathbf{y}} E_{\theta}(X, \mathbf{y}^{(t)}), \quad \phi_{\theta}(X) = \mathbf{y}^{(T)}.
\end{equation}
Under certain conditions and with a large enough number of steps $T$, this procedure provides a sufficiently accurate solution that is itself well-defined and differentiable: either explicitly, through the unrolled gradient descent~\citep{andrychowicz2016learning, MAML}, or via the implicit function theorem applied to the optimality condition of~\eqref{eq:equilibrium_aggregation}~\citep{DEQs, blondel2021efficient}.
This allows to learn parameters of the potential $\theta$ and also to train the whole model involving the aggregation end-to-end.

In general, it is not guaranteed that gradient-based optimization will converge to the global minimum of~\eqref{eq:equilibrium_aggregation} when the potential is an arbitrarily structured neural network.
However, with a large enough regularization weight $\lambda$, it is possible to enforce convexity at least in the subspace of $\mathcal{Y}$~\citep{rajeswaran2019meta}.
When the gradient descent is initialized from a learnable starting point or, as in our implementation, from the zero vector, it becomes sufficient to find just a stationary point as long as the next layer in the network makes use of the aggregation result. 
Relaxing the need for convergence to the global minimum together with the use of flexible neural networks allows to implement a potentially complex and expressive aggregation mechanism. 
In our implementation we employ explicit differentiation through gradient descent and find that the network generally learns convergent dynamics~\eqref{eq:gradient_descent} automatically, even with a fairly small number of iterations such as $T=10$.

To additionally encourage convergence, we consider the following \emph{auxiliary loss} that penalizes the norm of the energy gradient at each step of optimization:
\begin{equation}\label{eq:aux_loss}
    \textstyle L_{\text{aux}}(X, \mathbf{y}, \theta) = \frac{1}{T} \sum_{t=1}^T || \nabla_{\mathbf{y}} E_{\theta}(X, \mathbf{y}^{(t)}) ||_2^2.
\end{equation}
We simply add the auxiliary loss to the main loss incurred by the task of interest and optimize the sum during the training. 
We further empirically assess convergence of the inner-loop optimization in Section~\ref{sec:molpcba}.

\begin{table}[]
    \centering
    \begin{tabular}{c|c|c}
    \toprule
    \bf Aggregation & $\phi(X)$ & $F(\mathbf{x}, \mathbf{y})$ \\
    \midrule
    Mean & ${1 \over N} \sum_{i=1}^N \mathbf{x}_i$ & $(\mathbf{x} - \mathbf{y})^2$ \\
    Median & $\mathbf{x}_{[N/2]}$ & $|\mathbf{x} - \mathbf{y}|$ \\
    Max & $\max\{\mathbf{x}_1, \ldots, \mathbf{x}_N\}$ & $\max(0, \mathbf{x} - \mathbf{y})$ \\
    Sum & \makecell{ $\sum_{i=1}^N \mathbf{x}_i$ or \\ \tiny  $  {\displaystyle \arg\min_{\mathbf{y}}} \left[ { \mathbf{y}^2 \over 2} + \sum_i F(\mathbf{x}_i, \mathbf{y}) \right] $} & $-\mathbf{x} \cdot \mathbf{y}$ \\
    \midrule
    \makecell{ \bf Equilibrium \\ \bf Aggregation} & $ \arg\min_{\mathbf{y}} E_{\theta}(X, \mathbf{y}) $ & \makecell{Neural network \\ $ \text{F}_{\theta}(\mathbf{x}, \mathbf{y}) $} \\
    \bottomrule
    \end{tabular}
    \caption{A comparison between Equilibrium Aggregation and pooling-based aggregation methods. Equations are given for the scalar case or can be applied coordinate-wise in higher dimensions.}
    \label{tab:aggregation}
\end{table}

\section{Universal Function Approximation on Sets}
According to the universal function approximation theorem for neural networks \citep{Hornik1989,Cybenko1989,Funahashi1989}, an infinitely large multi-layer perceptron can approximate any continuous function on compact domains in $\mathbb{R}$ with arbitrary accuracy. 
In machine learning, we typically do not know the function we aim to approximate. 
Hence, knowing that neural networks can in theory approximate anything is comforting. 
Equally, we seek to build inductive biases into the networks in order to facilitate learning, using more sophisticated architectures than multi-layered perceptrons. 
It is imperative to be aware whether and to what extent those modifications restrict the space of learnable functions.

Similar constructions to Equilibirum Aggregation, i.e. optimization-defined models defined as $\mathbf{y} = \arg \min_{\mathbf{y}} G(X, \mathbf{y})$, have previously been studied in the literature, especially in the context of permutation-sensitive (i.e. \textit{not} permutation invariant) functions~\citep{pineda1987generalization, MetaUniversality, DEQs} and various results with respect to universal function approximation were obtained.
It is not obvious, however, how these results translate to the important permutation-invariant case we consider in this paper. Introducing permutation invariance self-evidently restricts the space of functions that can be approximated.
In the next section we directly address the question of what set functions can be learned by Equilibrium Aggregations and establish a universality guarantee.

\subsection{Universality of Equilibrium Aggregation}

In this section, we will see that Equilibrium Aggregation is indeed able to approximate all continuous permutation invariant functions $\psi$.
We start by stating a few assumptions: We assume a fixed input set size $N$ of scalar inputs\footnote{This is a common simplification in the literature on universal function approximation on sets. For a discussion on how to generalise from the scalar to the vector case, see \cite{Hutter}.} $x_i$ (note the dropping of the boldface to indicate that these are not vectors anymore) and a scalar output. We further assume that input space $\mathcal{X}$ is a compact subset of $\mathbb{R}^N$. For simplicity, without loss of generality (as we can always rescale the inputs), we choose this to be $[0,1]^N$:
\begin{align}
\psi: [0,1]^N \to \mathbb{R}.
\end{align}
As $\psi$ is permutation invariant, the vector valued inputs can be seen as (multi)sets. For a discussion on why considering uncountable domains (i.e. the real numbers) is important for continuity, see Section 3 of \cite{Wagstaff2019}. 

We consider a neural network architecture with Equilibrium Aggregation as a global pooling operation of the following form:
\begin{align}
\label{eq:equil_with_rho}
    \phi(X) = \rho(\arg \underset{\mathbf{y}}{\operatorname{min}}\sum_i F_{\theta}({x}_i,\mathbf{y})),
\end{align}

where $F_{\theta}$ (the potential function) and $\rho$ are modeled by neural networks, which are assumed to be universal function approximators. Note that, for simplicity of the proof, we implicitly set the regulariser to 0. We refer to the output of argmin $\sum_i F_{\theta}({x}_i,\mathbf{y})$ as the \textit{latent space}, analogous to the terminology used in \cite{Wagstaff2019} with respect to the Deep Sets architecture \citep{DeepSets}. We prove the following:

\begin{theorem}
\label{thm:main_theorem}
Let the latent space be of size $M=N$, i.e. $\mathbf{y}\in \mathbb{R}^N$. Then all permutation invariant continuous functions $\psi$ can be approximated with Equilibrium Aggregation as defined in \eqref{eq:equil_with_rho}.
\end{theorem}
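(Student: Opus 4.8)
The plan is to reduce the claim to the known universality of sum-pooling Deep Sets with latent dimension $N$, and then to show that Equilibrium Aggregation can realize (an arbitrarily good approximation of) sum pooling. Concretely: by the results of \citet{Wagstaff2019} (building on \citet{DeepSets}), every continuous permutation-invariant $\psi:[0,1]^N\to\mathbb{R}$ can be written as $\psi(X)=\rho^*\bigl(\sum_{i=1}^N f^*(x_i)\bigr)$ for some continuous $f^*:[0,1]\to\mathbb{R}^N$ and continuous $\rho^*:\mathbb{R}^N\to\mathbb{R}$; the canonical choice is the power-sum embedding $f^*(x)=(x,x^2,\dots,x^N)$, which separates multisets of size $N$ by Newton's identities, composed (after $\psi$) with the continuous inverse of the induced map, continuity of the inverse following from compactness of the domain. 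This is exactly the regime $M=N$ assumed in the theorem, and it is also the threshold below which such a representation provably fails, which is why $M=N$ is the natural hypothesis.

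Next I would exhibit sum pooling as a special case of \eqref{eq:equil_with_rho}. Take the potential $F^*(x,\mathbf{y}) = \tfrac{1}{2N}\|\mathbf{y}\|_2^2 - \langle f^*(x),\mathbf{y}\rangle$. Then $\sum_{i=1}^N F^*(x_i,\mathbf{y}) = \tfrac12\|\mathbf{y}\|_2^2 - \langle \sum_i f^*(x_i),\mathbf{y}\rangle$ is $1$-strongly convex and coercive, hence has the unique minimizer $\mathbf{y}^*(X)=\sum_{i=1}^N f^*(x_i)$. Therefore $\rho^*\bigl(\arg\min_{\mathbf{y}}\sum_i F^*(x_i,\mathbf{y})\bigr)=\psi(X)$ exactly, which already proves universality with arbitrary (not necessarily neural-network) $F^*$ and $\rho^*$. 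This is precisely the mechanism recorded in the ``Sum'' row of Table~\ref{tab:aggregation}, with the $\tfrac1{2N}\|\mathbf{y}\|_2^2$ term playing the role of the regularizer that was set to $0$ in \eqref{eq:equil_with_rho}.

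It then remains to replace $F^*$ and $\rho^*$ by universal approximators. Since $[0,1]$ is compact and $K:=\{\sum_i f^*(x_i): x_i\in[0,1]\}$ is contained in a closed ball $B$, I would pick a neural network $F_\theta$ with $\|F_\theta-F^*\|_\infty$ small on $[0,1]\times B'$ (with $B'\supset B$ a slightly larger ball) and a neural network $\rho$ with $\|\rho-\rho^*\|_\infty$ small on $B'$. A standard perturbation estimate for strongly convex functions then gives $\|\mathbf{y}_\theta(X)-\mathbf{y}^*(X)\|_2 = O(\sqrt{\varepsilon})$ uniformly in $X$, where $\mathbf{y}_\theta(X)=\arg\min_{\mathbf{y}}\sum_i F_\theta(x_i,\mathbf{y})$; combined with uniform continuity of $\rho^*$ on $B'$ and the uniform closeness of $\rho$ to $\rho^*$, this yields $\sup_X\bigl|\rho(\mathbf{y}_\theta(X)) - \psi(X)\bigr| \to 0$.

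The step I expect to be the main obstacle is this last one --- propagating sup-norm closeness of the potentials to closeness of their $\arg\min$s when the inner optimization ranges over the noncompact space $\mathbb{R}^N$. One must rule out that the approximate energy $\sum_i F_\theta(x_i,\cdot)$ acquires smaller values far from $K$, where $F_\theta$ is no longer controlled. This is handled by the quadratic growth built into $F^*$: the true energy exceeds its minimum by a fixed margin on the sphere bounding a large ball, so a sufficiently accurate uniform approximation on that ball keeps $\mathbf{y}_\theta(X)$ trapped inside it, where the $O(\sqrt\varepsilon)$ bound applies. Equivalently, one may simply retain a small explicit regularizer $R_\theta(\mathbf{y})=c\|\mathbf{y}\|_2^2$ as in \eqref{eq:equilibrium_aggregation}, which supplies coercivity and strong convexity globally and makes this point immediate, leaving only the residual $-\langle f^*(x),\mathbf{y}\rangle$ to be approximated by $F_\theta$. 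The remaining verifications --- continuity and injectivity of the power-sum embedding on multisets, compactness of $K$, and uniform continuity of $\rho^*$ --- are routine.
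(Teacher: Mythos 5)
Your proposal is correct and follows essentially the same route as the paper: both engineer a quadratic (in $\mathbf{y}$) potential whose energy minimizer is the power-sum embedding $y_k=\sum_i x_i^k$ --- your $F^*(x,\mathbf{y})=\tfrac1{2N}\|\mathbf{y}\|_2^2-\langle f^*(x),\mathbf{y}\rangle$ and the paper's $\sum_k(y_k/N-x^k)^2$ coincide up to a $\mathbf{y}$-independent additive term and a rescaling --- and both then invoke the Deep Sets / Wagstaff et al.\ lemmas that this embedding is injective with continuous inverse on $[0,1]^N$, so that $\rho=\psi\circ(\text{embedding})^{-1}$ is continuous and hence itself approximable. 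The one place you go further than the paper is in propagating the sup-norm error from replacing $F^*$ and $\rho^*$ by finite-capacity networks through the inner $\arg\min$: your use of strong convexity for an $O(\sqrt\varepsilon)$ stability bound on the minimizer, together with coercivity (or an explicit quadratic regularizer) to keep $\mathbf{y}_\theta(X)$ inside a fixed ball where the approximation is uniformly controlled, supplies the argument that the paper's proof leaves implicit when it simply asserts that $F_\theta$ and $\rho$ ``can therefore be approximated by a neural network.''
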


\begin{proof}
For the purpose of this proof, we assume $F_{\theta}$ takes the form:
\begin{align}
\label{eq:injective_energy}
   F_{\theta}({{x}_i}, \mathbf{y}) = \sum_{k=1}^M (\frac{y_k}{N}-{x}_i^k)^2,
\end{align}

where $k$ serves both as an index for the vector $\mathbf{y}$ and as an exponent for ${x}_i$. There are two sums now, an inner one in the definition of $F_{\theta}$ and an outer one over the nodes in \eqref{eq:equil_with_rho}. Note that $F_{\theta}$ is continuous and can therefore be approximated by a neural network. Importantly, $F_{\theta}$ is also convex and can therefore assumed to be optimised with gradient descent to find $\arg {\operatorname{min}(\mathbf{y})}$. Note that all $M$ terms can be optimised independently as $X$ is fixed. It is a well-known fact that minimising the sum of squares yields the mean:
\begin{align}
        \arg \underset{z}{\operatorname{min}}\sum_{i=1}^N (z-x_i)^2 = \frac{1}{N} \sum_{i=1}^N x_i.
\end{align}
It follows that minimising the sum of energies defined in \eqref{eq:injective_energy} yields
\begin{align}
\label{power_of_sum_mapping}
    y_{k}^{min}= \sum_i x_i^k \quad \text{for} \,\, k \in \{1, \dots, N\}.
\end{align}
For inputs $(x_1, ..., x_M) \in [0,1]^M$, this mapping to $\mathbf{y}$ is evidently continuous and surjective with respect to its range $[0,M]^N$. We also know from Lemma 4 in \cite{DeepSets} that this mapping is injective and from Lemma 6 that it has a continuous inverse.\footnote{We refer to Appendix B.4 in \cite{Wagstaff2019} as to why the term $k=0$ in \eqref{power_of_sum_mapping} can be dropped for fixed set sizes.}
$\psi$ is continuous by definition and, therefore,

\begin{align}
    \rho = \psi \circ \left( \arg \underset{\mathbf{y}}{\operatorname{min}}\sum_i F_{\theta}({x}_i,\mathbf{y}) \right)^{-1}
\end{align}

is continuous\footnote{The superscript $-1$ indicates the functional inverse w.r.t. $X$.} as long as the inputs $x_i$ are constrained to [0,1] and can therefore be approximated by a neural network. However, via a global re-scaling of the inputs, this proof can be used for any bounded input domain. Hence, any permutation invariant, continuous $\psi$ on a bounded domain can be appoximated via Equilibrium Aggregation for a latent space of size $M=N$.
\end{proof}

\subsection{Comparison to Deep Sets}

So far, we have only been able to prove that Equilibrium Aggregation scales at least as well as Deep Sets. By that, we mean that universal function approximation can be achieved with $N=M$, i.e. having as many latents as inputs is \textit{sufficient}. (For Deep Sets, we also know that $N=M$ is necessary \citep{Wagstaff2019}.) Even though we currently do not know whether it is possible to achieve universal function approximation with a smaller latent space, there is some indication that Equilibrium Aggregation may have more representational power, as we will lay out in the following:

Using one latent dimension, Deep Sets with max-pooling can obviously represent $\psi(X)=\max(X)$, but it cannot represent (or even approximate) the sum for set sizes larger than 1. Vice versa, sum-pooling can represent $\psi(X)=\text{sum}(X)$, but it cannot represent $\max(X)$ \citep{Wagstaff2019}. Equilibrium Aggregation can represent both sum and max pooling, each with just one latent dimension (i.e. $\mathbf{y}\in\mathbb{R}^1$) as shown in Table~\ref{tab:aggregation}.

\section{Related Work}
Equilibrium Aggregations sits at the intersection of two machine learning research areas: aggregation functions and implicit layers. In the following, we give an overview over the work closest related in each of the fields, respectively.

\subsection{Aggregation Functions}
Perhaps the most popular approach for obtaining a permutation invariant encoding of sets is Sum pooling. A particular instance of this is Deep Sets~\citep{DeepSets}, as described in~\eqref{eq:sum_aggregation}.
A central finding of \cite{Wagstaff2019} is that the latent space, i.e. the dimensionality of the result of $\sum_i f(x_i) \in \mathbb{R}^M$ needs to be at least as large as the number of inputs $N$, i.e. $M \geq N$ in order to guarantee universal function approximation.
This applies to many other aggregation methods as well and, to the best of our knowledge, there is currently no known pooling operation which does not introduce this scaling issue.

Principal Neighbourhood Aggregation (PNA)~\citep{Corso} addresses the limitations of each individual pooling operator such as Sum or Max by combining four different pooling operators and three different scaling strategies resulting into a simultaneous 12-way aggregation. 
Despite the more sophisticated aggregation procedure,~\citet{Corso} come to very similar conclusions as \cite{DeepSets} and \cite{Wagstaff2019}, namely that $N=M$ is both necessary and sufficient. 
They prove the necessity for any set of aggregators as well as the sufficiency for a specific set.
In our work, we further expand this line of thinking by allowing the model to ~\emph{learn} the desired aggregation operator which may include PNA or something drastically different.

Learnable Aggregation Functions (LAF)~\citep{LAF2020} provide a similar framework for learning an aggregation operator by expressing it as a combination of several weighted $L_p$ norms, where the weights and the $p$ parameters are trained jointly with the model. 
Even though LAFs are capable of expressing operators used in PNA and beyond, it is not clear how they can reproduce other aggregation methods such as attention. 
In contrast, our method can learn attention (see Supplementary Material for details) as well as even more expressive aggregation functions.

Further generalization of the functional form of the aggregation operator leads to the Karcher or Fr\'echet mean~\citep{Grove1973}, which are defined as a solution to the distance-generalization problem over a metric space $\mathcal{X}$ with a metric $d(\cdot, \cdot)$:
$$
    \bar{\mathbf{x}} = \arg \min_{\bar{\mathbf{x}} \in \mathcal{X}} \sum_{i=1}^N d^2(\bar{\mathbf{x}}, \mathbf{x}_i), \quad \mathbf{x}_i \in \mathcal{X}.
$$
While closely related to the Karcher or Fr\'echet mean, Equilibrium Aggregation differs in not restricting the aggregation result to the same space as $\mathcal{X}$ and allowing radically non-symmetrical potential functions, together with the regularizer. 

Finally, Janossy Pooling~\citep{Janossy} generalizes the idea of standard, coordinate-wise pooling to make use of higher-order interactions between set elements.
Despite the potential for practical effectiveness, it is unclear whether these developments guarantee better approximation results in the general case \citep{Wagstaff2021}. 
While Equilibrium Aggregation is also fully compatible with Janossy Pooling and may profit from even more expressive energy functions with pairwise or triplet interactions, this may not be necessary as such interactions can be emulated within the optimization process and ultimately come at a significant computational cost for larger set sizes.

In addition to formulating more expressive pooling operators, there is also a body of work concerned with multi-step parametric models for set encoding~\citep{vinyals2015order, lee2019set}. 
Inevitably, to achieve permutation invariance these models rely on some kind of a pooling as a building block, such as the ones outlined above.
Equilibrium Aggregation being a drop-in replacement for sum- or attention-pooling can be used in those models, too.

\subsection{Implicit and optimization-based models}

Gradient-based optimization has been utilized in a large number of applications~\citep{amos2019differentiable}: image denoising~\citep{putzky2017recurrent}, molecule generation~\citep{duvenaud2015convolutional, alquraishi2019alphafold}, planning~\citep{amos2018differentiable} and combinatorial search~\citep{hottung2020learning, bartunov2020continuous} to name a few. 
While there is a large body of work where gradient descent dynamics is decoupled from learning, (e.g.,~\cite{du2019implicit, song2019generative}, our work is particularly closely related to methods that seek to learn the underlying objective function end-to-end, such as~\cite{putzky2017recurrent, rubanova2021constraint}.

A closely-related family of methods involve the idea of defining computations inside a model \emph{implicitly}, i.e. via a set of conditions that a particular variable must obey instead of prescribing directly how the variable's value should be computed.
Deep Equilibrium Models (DEQs) formulate this via a fixed point of an update rule specified by the model~\citep{pineda1987generalization, liao2018reviving, DEQs} and Implicit Graph Neural Networks explore this idea in the context of graphs~\citep{gu2020implicit}. 
Neural ODEs~\citep{chen2018neural} allow to parametrize a derivative of a continuous-time function specifying the computation of interest.
iMAML~\citep{MetaImplicit} considers an implicit optimization procedure for the purpose of finding model parameters suitable for gradient-based meta-learning~\citep{MAML}.

Our work is similar in spirit but focuses specifically on the aggregation block for encoding sets, which can be seen as a small but generic building block that can be combined with arbitrary model architectures. 
Similarly to OptNet~\citep{amos2017optnet}, we propose a layer architecture that can be used inside another implicit or traditional multi-layer neural network. 

\subsection{Learning on distributions}

An important use-case for set encoding is machine learning models aiming at learning a distribution from a finite sample.
A recent example is Neural Processes~\citep{garnelo2018neural}, which builds a simple permutation-invariant representation of the training set via averaging of its encoded elements and a similar construction of \citet{edwards2016towards}.
Equilibrium Aggregation can be applied to building a more advanced variation on this idea that substitutes the average pooling with a maximum a posteriori (MAP) parameter estimation (see the Supplementary Material for details). 
It is also straight forward to replace the MAP formulation with the parametric variational inference approach, further bridging the gap between set encoding and distribution learning.

\section{Experiments}
In this section, we describe three experiments with the goal of analyzing the performance of Equilibrium Aggregation in different tasks and comparing it to existing aggregation methods.
Our intention is not to achieve state of the art results on any particular task.
Instead, we strive to consider archetypal scenarios and applications in which  performance significantly depends on the choice of aggregation method so it can be studied in isolation from other issues.

In all experiments we let the models to train for $10^7$ steps of Adam optimizer~\citep{kingma2014adam}.
Since maximizing performance is not the goal of our experiments, we do not perform an extensive hyperparameter search, only limiting it to a sweep over the learning rate (chosen from $\{10^{-4}, 3 \times 10^{-4}, 10^{-3}\}$) and the auxiliary loss weight (on MOLPCBA only). To that end, we use  a small subset of the training set reserved for validation (Omniglot and MOLPCBA benchmarks only).
We rely on a single GPU training regime using Nvidia P100s and V100s. 
All experimental code is written in Jax primitives~\citep{jax2018github} using Haiku~\citep{haiku2020github}.
Source code for the most crucial parts of our implementation can be found in the Supplementary Material.

\subsection{Median estimation}

\begin{figure}
    \centering
    \includegraphics[width=0.94\linewidth]{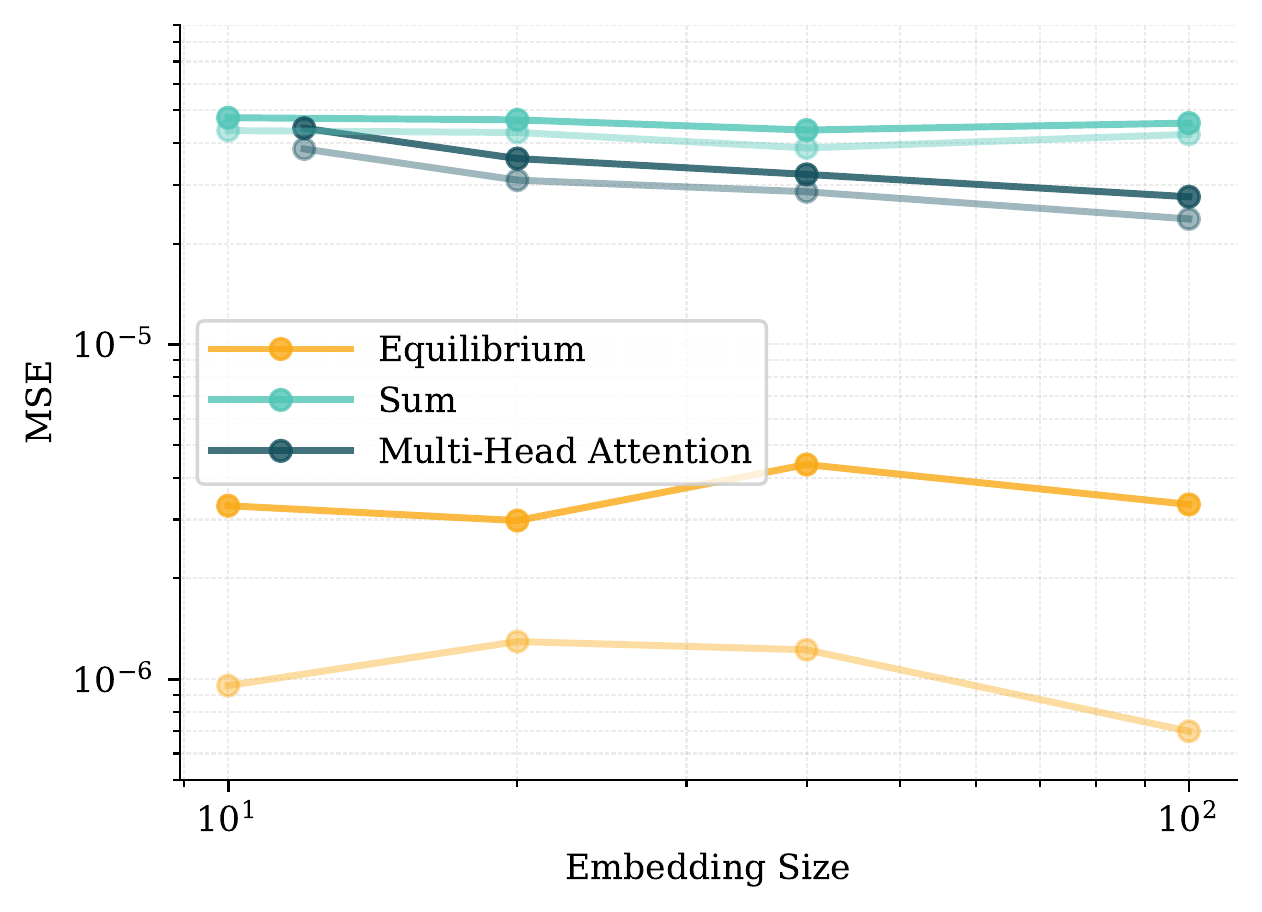}
    \caption{Median estimation of a 100-number set with three different aggregation methods. The bold lines correspond to the average performance over 5 seeds, the faded lines show the best performing seed of the respective model. Mean square error is computed for varied set embedding sizes on $8 \times 10^5$ number of sets.}
    \label{fig:exp_toy}
\end{figure}


In this experiment, the neural network is tasked with predicting the median value of a  set of 100 randomly sampled numbers.
Each set is sampled from either a Uniform, Gamma or Normal distribution with fixed parameters, similarly to~\cite{Wagstaff2019}.
The basic architecture for pooling-based aggregation baselines consists of first embedding each number in the set with a fully connected ResNet~\citep{ResNet} with layer sizes $[256, 256, D]$, where $D$ is the set embedding size.
Then, the embeddings are pooled with the corresponding method into a $D$-dimensional vector and the median is predicted from it using another fully connected network with layer sizes $[D, 128, 1]$. 
A simple square loss is used to regress the median.

Equilibrium aggregation, in contrast, performs the input encoding and aggregation simultaneously by doing a 5-step gradient optimization of~\eqref{eq:equilibrium_aggregation} with the potential function implemented as a ResNet with layer sizes $[256, 256, 1]$ taking a $D+1$-dimensional input ($D$ for the implicit aggregation result and $1$ for the input number).
The result is then also transformed into the prediction using the same output network as in the baseline methods. 

We compare three models, Sum aggregation analogous to Deep Sets~\citep{DeepSets}, Multi-head attention with 4 heads, each operating with $D/4$ dimensional keys, values and learned query vectors, and Equilibrium Aggregation as described above.
For each of the models we vary the embedding size and assess the mean square error after $10^7$ training steps.
Empirical results are shown on Figure~\ref{fig:exp_toy}.

Equilibrium aggregation achieves one (for average across 5 seeds) or two (for the best out of 5 seeds) orders of magnitude better estimation error than the baseline pooling methods which confirms its higher representational power in this simple setting.
Importantly, in this experiment, there is no distinction between training and test distributions as the samples are continuously drawn and never repeated. Hence, we are primarily testing the representation power of the approaches as opposed to data efficiency in this particular example. However, it is worth noting that all architectures have roughly the same amount of trainable parameters. Presumably, the low error achieved by Equilibrium Aggregation suggests that it managed to discover or reasonably well approximate the analytical solution $F(\mathbf{x}, \mathbf{y}) = |\mathbf{x} - \mathbf{y}|$.


\subsection{Omniglot class counting}

\begin{figure}
    \centering
    \begin{subfigure}[b]{0.99\linewidth}
    \includegraphics[width=\textwidth]{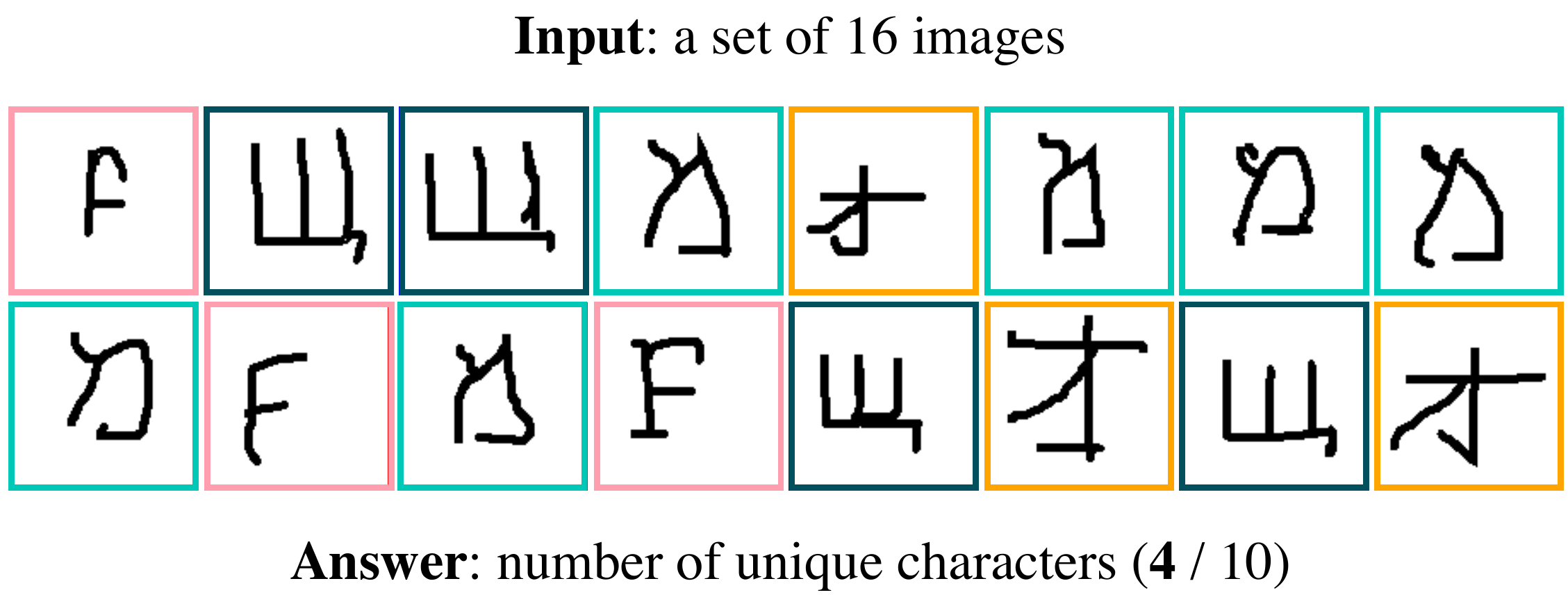}
    \caption{Task setup.}
    \label{fig:omniglot_task}
    \end{subfigure}
    \begin{subfigure}[b]{0.99\linewidth}
    
    \includegraphics[width=0.99\textwidth]{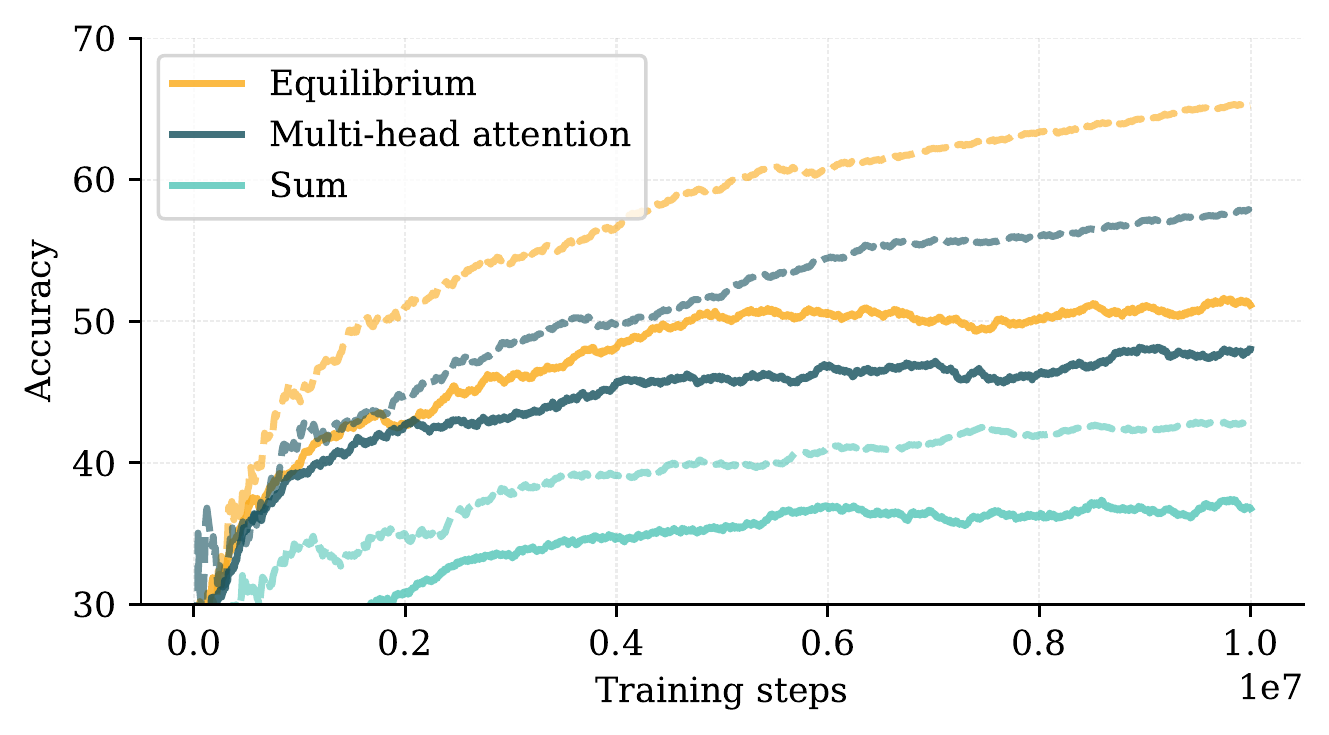}
    \caption{Train (dashed) and test (solid) accuracy for different aggregation methods.}
    \label{fig:omniglot_results}
    \end{subfigure}
    \caption{Omniglot class counting task.}
\end{figure}

We proceed to the more challenging task of counting the number of unique character classes in a set of 16 Omniglot images, which is inspired by~\cite{lee2019set}.
Omniglot~\citep{lake2015human} is a dataset of handwritten characters that are organized into alphabets and then into character classes for each of which only 20 instances are available. 
We randomly choose between 1 and 10 character classes and sample their images to form the input set.
The model then needs to aggregate those images and infer the number of unique character classes by outputting a vector of probabilities for each of the $1, 2, \ldots, 10$ possible number of classes (see Figure~\ref{fig:omniglot_task} for a visual illustration).

Original images are downsized to $32 \times 32$ and encoded using a convolutional ResNet with $[16, 32, 64]$ hidden channels in each of the three blocks correspondingly. 
Each block operates with $3 \times 3$ filters and a stride of 2 and hence reduces spatial sizes of the input tensor by half. 
The ResNet output is then flattened and linearly projected into a $256$-dimensional input embedding.
After the encoding step, as in the previous experiment, Sum, Multi-Head Attention with 4 heads and Equilibrium Aggregation perform set aggregation into $256$-dimensional set embedding and predicted the number of classes using a simple softmax distribution using a fully-connected ResNet with layer sizes of $[128, 10]$.
Equilibrium Aggregation also uses a ResNet potential with $[512, 512, 32]$ structure where the output of the last layer is squared and then summed to form a scalar potential value. 
We used 10 iterations of inner-loop optimization in this experiment.

Each model is trained on the characters from Omniglot train set for $10^7$ steps and with a batch size of $8$. Train and test accuracies are reported in Figure~\ref{fig:omniglot_results}.
One can see that, again, Equilibrium Aggregation outperforms both of the baselines, both in terms of train and test set accuracy.
This shows that, on the one hand, Equilibrium Aggregation has a significantly larger capacity and thus better fits the training data. 
On the other hand, this capacity results into better generalization and, presumably, a more robust aggregation strategy. 

\subsection{Global aggregation in Graph Neural Networks}\label{sec:molpcba}

\begin{figure}[t]
    \centering
    \includegraphics[width=0.99\linewidth]{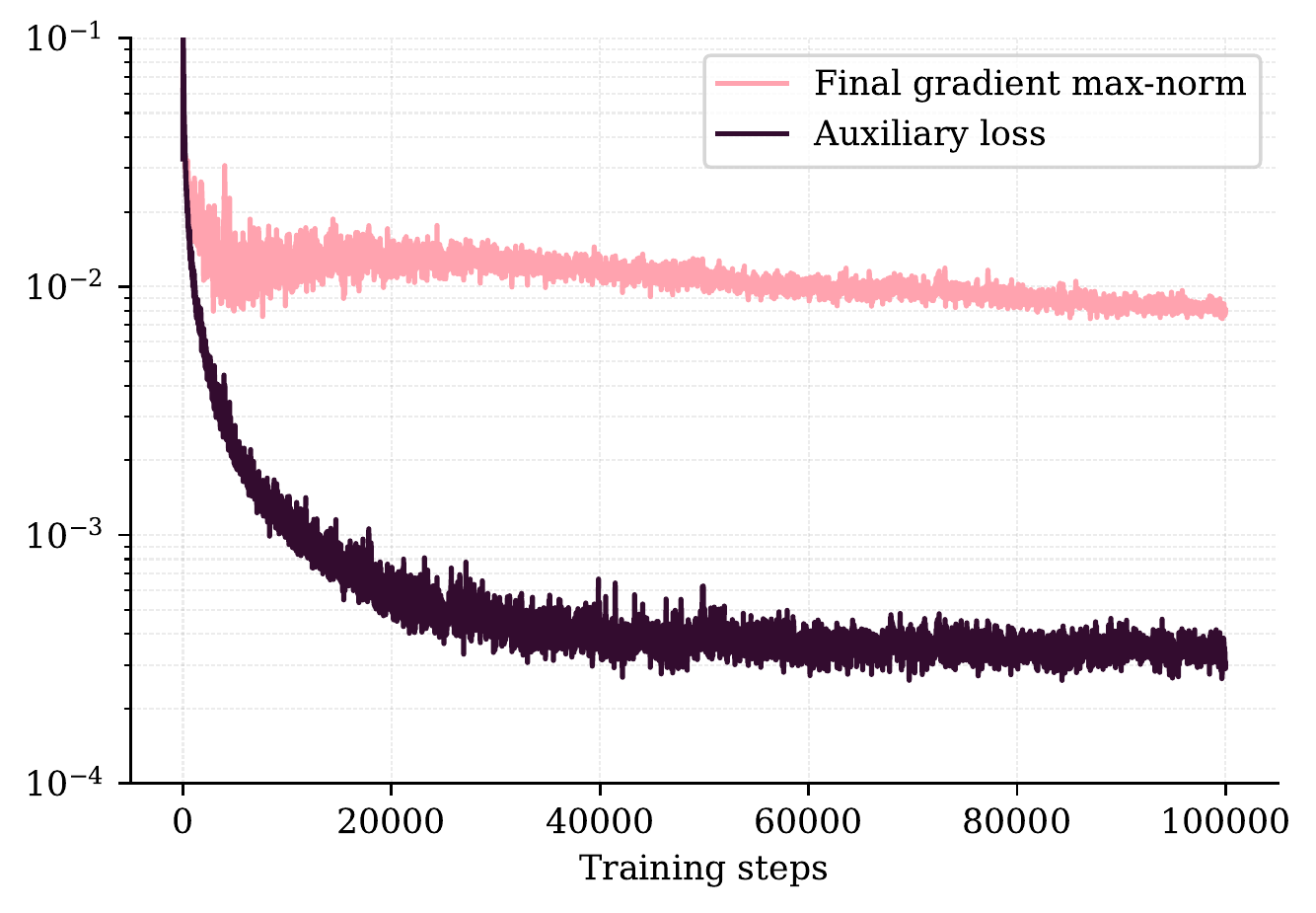}
    \caption{Inner-loop optimization statistics on MOLPCBA with the GIN architecture. The pink curve shows the maximum value of the $L^1$ norm along any dimension of the gradient on the last (15th) iteration of the inner loop. A value of $10^{-2}$ indicates a small gradient update and therefore good convergence of the optimizer. The dark purple curve tracks the auxiliary loss, i.e. the $L^2$ norm of the gradient update averaged across all 15 optimization steps (see~\eqref{eq:aux_loss}). Overall, these curves indicate stable, convergent behaviour despite a modest number of inner-loop optimization steps.
    }
    \label{fig:aux_loss}
\end{figure}

\begin{table*}[]
    \centering
    \caption{Comparison between different aggregation methods on MOLPCBA.}
    \begin{tabular}{c|c|c|c}
        \toprule
         \bf Local Aggregation & \bf Global Aggregation & \bf Validation MAP & \bf Test MAP \\
         \midrule
         \makecell{Graph Convolutional Network \\ \small \citep{kipf2016semi}} & \makecell{Sum \\ Multi-Head Attention \\ Principal Neighbourhood Aggregation \\ \bf Equilibrium Aggregation} & \makecell{0.223 \\ 0.248 \\ 0.226 \\ \bf 0.269} & \makecell{0.203 \\ 0.229 \\ 0.209 \\ \bf 0.252} \\
         \midrule
         \makecell{Graph Isomorphism Network \\ \small \citep{xu2018powerful}} &  \makecell{Sum \\ Multi-Head Attention \\ Principal Neighbourhood Aggregation \\ \bf Equilibrium Aggregation} & \makecell{0.255 \\ 0.254 \\ 0.262 \\ \bf 0.263} & \makecell{0.232 \\ 0.234 \\ 0.244 \\ \bf 0.246} \\
         \midrule
         \bf Equilibrium Aggregation & \bf Equilibrium Aggregation & \bf 0.269 & \bf 0.258 \\
         \bottomrule
    \end{tabular}
    \label{tab:gnn}
\end{table*}

Finally, we study the effect of different aggregation methods in the global \textit{readout} layer of a graph neural network (GNN) on a well-established MOLPCBA benchmark~\citep{hu2020open}.
In this task, the model is required to predict 128 global binary properties of an input molecule.
This is traditionally implemented within the GNN framework by first applying several layers of message-passing on a graph and then aggregating the resulting 300-dimensional node embeddings into a single 300-dimensional graph representation from which the predictions are made.
Since there is more than one prediction task per molecule, mean average precision (MAP) is used as an evaluation metric.
The test MAP is reported for the best MAP attained on the validation set as the model is training.
The validation and test metrics are periodically evaluated from model snapshots taken approximately every $10^4$ training steps.

For this experiment, we choose two popular GNN architectures, namely a Graph Convolutional Network (GCN)~\citep{kipf2016semi} and a Graph Isomorphism Network (GIN)~\citep{xu2018powerful} that both use a simple Sum readout in their canonical implementations by \citet{hu2020open}.
We leave the architectures unchanged and only vary the global readout operation.
Our implementation uses the Jraph library~\citep{jraph2020github} and dynamic batch training with up to $8$ graphs and $1024$ nodes in a batch.

For the potential network we use an architecture similar to the previous experiment with layer sizes $[600, 300, 32]$, sum-of-the-squares output and $15$ iterations for energy minimization.

The results are provided in Table~\ref{tab:gnn}. 
Overall, the empirical findings on MOLPCBA are consistent with the previous experiments with Multi-Head Attention providing a noticeable performance improvement over the basic Sum aggregation and Equilibrium Aggregation performing even better.
In addition, we also evaluate Principal Neighbourhood Aggregation (PNA)~\citep{Corso}, which has been proposed to address limitations an each individual pooling method in the context of GNNs and combines 12 combinations of scaled pooling methods.
When combinining PNA with the GCN model, our experiments only show minor performance improvements over Sum pooling, in part because of increased overfitting.
However, when applied to the GIN architecture, it achieves performance levels almost on par with Equilibrium Aggregation.

These results confirm one of the central hypotheses of this research: namely that the global aggregation of node embeddings is a critical step in graph neural networks. 
Perhaps surprisingly, the GCN generally benefited more from more advanced aggregation methods which is probably due to smaller number of parameters and thus decreased risk of overfitting.
It is also worth noting that top performing GNN architectures achieve significantly higher test MAP on this task (see, e.g.,~\citet{yuan2020large, brossard2020graph}).

In addition, we test an architecture where both local (i.e. node-level) and the global aggregations are performed using Equilibrium Aggregation. 
This model yields even better performance, albeit only marginally.
While more careful architecture design that takes into account the specifics of Equilibrium Aggregation could potentially lead to larger performance improvements, it should be noted that the molecular graphs in this task are relatively small and aggregation on the local level may be not the most critical step for a typical GNN.

Besides the task performance we also investigate the behaviour of the inner-loop optimization. 
Figure~\ref{fig:aux_loss} plots two major statistics that quantify this: the max-norm of the final iterate of the optimization $\max_d |\nabla_{y_d} E(X, \mathbf{y}^{(T)})|$ and $L_{\text{aux}}$~\eqref{eq:aux_loss}.
One can see that both rapidly decrease during the training and that a good degree of convergence is achieved.
We observe similar behaviour with GCN and on other tasks we considered earlier.

\section{Discussion and Conclusion}

This work provides a novel optimization-based perspective on the widely encountered problem of aggregating sets that is provably universal.
Our proposed algorithm, Equilibrium Aggregation, allows learning a problem-specific aggregation mechanism which, as we show, is beneficial across different applications and neural network architectures.
The consistent empirical improvement brought by the use of Equilibrium Aggregation not only shows that many existing models are struggling from aggressive compression and inefficient representation of sets but also suggests a whole new class of set- or graph-oriented architectures that employ a composition of Equilibrium Aggregation operations.
Beyond GNNs, other classes of models, such as Transformers, may also profit from more expressive aggregation operations, specificially in modelling long-term memory -- a topic strongly connected to compression of sets~\citep{rae2019compressive, bartunov2019meta}, as well as potentially reduce the number of layers needed.

While there is a strong indication that using Equilibrium Aggregation as a building block is effective, the incurred computational cost may require more developments in differentiable optimization~\citep{ernoult2020equilibrium}, architecture~\citep{amos2017input} and hardware design~\citep{kendall2020training}, especially in order to compete with modern extra large models.



\begin{acknowledgements} 
    We thank Peter Battaglia, Petar Veličković, Marcus Hutter, Yulia Rubanova and Marta Garnelo for their help with preparing the paper, insightful discussions and overall support during the course of the work. 
\end{acknowledgements}

\bibliography{bartunov_444}

\begin{thebibliography}{65}
\providecommand{\natexlab}[1]{#1}
\providecommand{\url}[1]{\texttt{#1}}
\expandafter\ifx\csname urlstyle\endcsname\relax
  \providecommand{\doi}[1]{doi: #1}\else
  \providecommand{\doi}{doi: \begingroup \urlstyle{rm}\Url}\fi

\bibitem[AlQuraishi(2019)]{alquraishi2019alphafold}
M.~AlQuraishi.
\newblock Alphafold at casp13.
\newblock \emph{Bioinformatics}, 35\penalty0 (22):\penalty0 4862--4865, 2019.

\bibitem[Amos(2019)]{amos2019differentiable}
B.~Amos.
\newblock \emph{Differentiable optimization-based modeling for machine
  learning}.
\newblock PhD thesis, PhD thesis, Carnegie Mellon University, 2019.

\bibitem[Amos and Kolter(2017)]{amos2017optnet}
B.~Amos and J.~Z. Kolter.
\newblock Optnet: Differentiable optimization as a layer in neural networks.
\newblock In \emph{ICML}, pages 136--145. PMLR, 2017.

\bibitem[Amos et~al.(2017)Amos, Xu, and Kolter]{amos2017input}
B.~Amos, L.~Xu, and J.~Z. Kolter.
\newblock Input convex neural networks.
\newblock In \emph{ICML}, pages 146--155. PMLR, 2017.

\bibitem[Amos et~al.(2018)Amos, Jimenez, Sacks, Boots, and
  Kolter]{amos2018differentiable}
B.~Amos, I.~Jimenez, J.~Sacks, B.~Boots, and J.~Z. Kolter.
\newblock Differentiable mpc for end-to-end planning and control.
\newblock \emph{NeurIPS}, 31, 2018.

\bibitem[Andrychowicz et~al.(2016)Andrychowicz, Denil, Gomez, Hoffman, Pfau,
  Schaul, Shillingford, and De~Freitas]{andrychowicz2016learning}
M.~Andrychowicz, M.~Denil, S.~Gomez, M.~W. Hoffman, D.~Pfau, T.~Schaul,
  B.~Shillingford, and N.~De~Freitas.
\newblock Learning to learn by gradient descent by gradient descent.
\newblock \emph{NeurIPS}, 29, 2016.

\bibitem[Ba et~al.(2016)Ba, Kiros, and Hinton]{ba2016layer}
J.~L. Ba, J.~R. Kiros, and G.~E. Hinton.
\newblock Layer normalization.
\newblock \emph{arXiv:1607.06450}, 2016.

\bibitem[Bai et~al.(2019)Bai, Kolter, and Koltun]{DEQs}
S.~Bai, J.~Z. Kolter, and V.~Koltun.
\newblock Deep equilibrium models.
\newblock \emph{NeurIPS}, 2019.

\bibitem[Bartunov et~al.(2019)Bartunov, Rae, Osindero, and
  Lillicrap]{bartunov2019meta}
S.~Bartunov, J.~Rae, S.~Osindero, and T.~Lillicrap.
\newblock Meta-learning deep energy-based memory models.
\newblock In \emph{ICLR}, 2019.

\bibitem[Bartunov et~al.(2020)Bartunov, Nair, Battaglia, and
  Lillicrap]{bartunov2020continuous}
S.~Bartunov, V.~Nair, P.~Battaglia, and T.~Lillicrap.
\newblock Continuous latent search for combinatorial optimization.
\newblock In \emph{Learning Meets Combinatorial Algorithms at NeurIPS2020},
  2020.

\bibitem[Battaglia et~al.(2018)Battaglia, Hamrick, Bapst, Sanchez-Gonzalez,
  Zambaldi, Malinowski, Tacchetti, Raposo, Santoro, Faulkner,
  et~al.]{battaglia2018relational}
P.~W. Battaglia, J.~B. Hamrick, V.~Bapst, A.~Sanchez-Gonzalez, V.~Zambaldi,
  M.~Malinowski, A.~Tacchetti, D.~Raposo, A.~Santoro, R.~Faulkner, et~al.
\newblock Relational inductive biases, deep learning, and graph networks.
\newblock \emph{arXiv:1806.01261}, 2018.

\bibitem[Blondel et~al.(2021)Blondel, Berthet, Cuturi, Frostig, Hoyer,
  Llinares-L{\'o}pez, Pedregosa, and Vert]{blondel2021efficient}
M.~Blondel, Q.~Berthet, M.~Cuturi, R.~Frostig, S.~Hoyer, F.~Llinares-L{\'o}pez,
  F.~Pedregosa, and J.-P. Vert.
\newblock Efficient and modular implicit differentiation.
\newblock \emph{arXiv:2105.15183}, 2021.

\bibitem[Bradbury et~al.(2018)Bradbury, Frostig, Hawkins, Johnson, Leary,
  Maclaurin, Necula, Paszke, Vander{P}las, Wanderman-{M}ilne, and
  Zhang]{jax2018github}
J.~Bradbury, R.~Frostig, P.~Hawkins, M.~J. Johnson, C.~Leary, D.~Maclaurin,
  G.~Necula, A.~Paszke, J.~Vander{P}las, S.~Wanderman-{M}ilne, and Q.~Zhang.
\newblock {JAX}: composable transformations of {P}ython+{N}um{P}y programs,
  2018.
\newblock URL \url{http://github.com/google/jax}.

\bibitem[Brossard et~al.(2020)Brossard, Frigo, and Dehaene]{brossard2020graph}
R.~Brossard, O.~Frigo, and D.~Dehaene.
\newblock Graph convolutions that can finally model local structure.
\newblock \emph{arXiv:2011.15069}, 2020.

\bibitem[Cai and Wang(2020)]{cai2020note}
C.~Cai and Y.~Wang.
\newblock A note on over-smoothing for graph neural networks.
\newblock \emph{arXiv:2006.13318}, 2020.

\bibitem[Chen et~al.(2020)Chen, Lin, Li, Li, Zhou, and Sun]{chen2020measuring}
D.~Chen, Y.~Lin, W.~Li, P.~Li, J.~Zhou, and X.~Sun.
\newblock Measuring and relieving the over-smoothing problem for graph neural
  networks from the topological view.
\newblock In \emph{AAAI 34}, 2020.

\bibitem[Chen et~al.(2018)Chen, Rubanova, Bettencourt, and
  Duvenaud]{chen2018neural}
R.~T. Chen, Y.~Rubanova, J.~Bettencourt, and D.~K. Duvenaud.
\newblock Neural ordinary differential equations.
\newblock \emph{NeurIPS}, 31, 2018.

\bibitem[Corso et~al.(2020)Corso, Cavalleri, Beaini, Li{\`{o}}, and
  Velickovic]{Corso}
G.~Corso, L.~Cavalleri, D.~Beaini, P.~Li{\`{o}}, and P.~Velickovic.
\newblock Principal neighbourhood aggregation for graph nets.
\newblock \emph{NeurIPS}, 2020.

\bibitem[Cybenko(1989)]{Cybenko1989}
G.~Cybenko.
\newblock Approximation by superpositions of a sigmoidal function.
\newblock \emph{Mathematics of Control, Signals, and Systems (MCSS),
  2(4):303–314}, 1989.

\bibitem[Diaconis and Freedman(1987)]{diaconis1987dozen}
P.~Diaconis and D.~Freedman.
\newblock A dozen de finetti-style results in search of a theory.
\newblock In \emph{Annales de l'IHP Probabilit{\'e}s et statistiques},
  volume~23, pages 397--423, 1987.

\bibitem[Du and Mordatch(2019)]{du2019implicit}
Y.~Du and I.~Mordatch.
\newblock Implicit generation and generalization in energy-based models.
\newblock \emph{arXiv:1903.08689}, 2019.

\bibitem[Duvenaud et~al.(2015)Duvenaud, Maclaurin, Iparraguirre, Bombarell,
  Hirzel, Aspuru-Guzik, and Adams]{duvenaud2015convolutional}
D.~K. Duvenaud, D.~Maclaurin, J.~Iparraguirre, R.~Bombarell, T.~Hirzel,
  A.~Aspuru-Guzik, and R.~P. Adams.
\newblock Convolutional networks on graphs for learning molecular fingerprints.
\newblock \emph{NeurIPS}, 28, 2015.

\bibitem[Edwards and Storkey(2016)]{edwards2016towards}
H.~Edwards and A.~Storkey.
\newblock Towards a neural statistician.
\newblock \emph{arXiv preprint arXiv:1606.02185}, 2016.

\bibitem[Ernoult et~al.(2020)Ernoult, Grollier, Querlioz, Bengio, and
  Scellier]{ernoult2020equilibrium}
M.~Ernoult, J.~Grollier, D.~Querlioz, Y.~Bengio, and B.~Scellier.
\newblock Equilibrium propagation with continual weight updates.
\newblock \emph{arXiv:2005.04168}, 2020.

\bibitem[Finn and Levine(2017)]{MetaUniversality}
C.~Finn and S.~Levine.
\newblock Meta-learning and universality: Deep representations and gradient
  descent can approximate any learning algorithm.
\newblock \emph{ICLR}, 2017.

\bibitem[Finn et~al.(2017)Finn, Abbeel, and Levine]{MAML}
C.~Finn, P.~Abbeel, and S.~Levine.
\newblock Model-agnostic meta-learning for fast adaptation of deep networks.
\newblock \emph{ICML}, 2017.

\bibitem[Funahashi(1989)]{Funahashi1989}
K.-I. Funahashi.
\newblock On the approximate realization of continuous mappings by neural
  networks.
\newblock \emph{Neural networks}, 1989.

\bibitem[Garnelo et~al.(2018)Garnelo, Schwarz, Rosenbaum, Viola, Rezende,
  Eslami, and Teh]{garnelo2018neural}
M.~Garnelo, J.~Schwarz, D.~Rosenbaum, F.~Viola, D.~J. Rezende, S.~Eslami, and
  Y.~W. Teh.
\newblock Neural processes.
\newblock \emph{arXiv preprint arXiv:1807.01622}, 2018.

\bibitem[Godwin* et~al.(2020)Godwin*, Keck*, Battaglia, Bapst, Kipf, Li,
  Stachenfeld, Veli\v{c}kovi\'{c}, and Sanchez-Gonzalez]{jraph2020github}
J.~Godwin*, T.~Keck*, P.~Battaglia, V.~Bapst, T.~Kipf, Y.~Li, K.~Stachenfeld,
  P.~Veli\v{c}kovi\'{c}, and A.~Sanchez-Gonzalez.
\newblock {J}raph: {A} library for graph neural networks in jax., 2020.
\newblock URL \url{http://github.com/deepmind/jraph}.

\bibitem[Grove and Karcher(1973)]{Grove1973}
K.~Grove and H.~Karcher.
\newblock How to conjugate c1-close group actions.
\newblock \emph{Mathematische Zeitschrift}, 132:\penalty0 11--20, 1973.
\newblock URL \url{http://eudml.org/doc/171906}.

\bibitem[Gu et~al.(2020)Gu, Chang, Zhu, Sojoudi, and El~Ghaoui]{gu2020implicit}
F.~Gu, H.~Chang, W.~Zhu, S.~Sojoudi, and L.~El~Ghaoui.
\newblock Implicit graph neural networks.
\newblock \emph{NeurIPS}, 2020.

\bibitem[He et~al.(2016)He, Zhang, Ren, and Sun]{ResNet}
K.~He, X.~Zhang, S.~Ren, and J.~Sun.
\newblock Deep residual learning for image recognition.
\newblock \emph{CVPR}, 2016.

\bibitem[Hennigan et~al.(2020)Hennigan, Cai, Norman, and
  Babuschkin]{haiku2020github}
T.~Hennigan, T.~Cai, T.~Norman, and I.~Babuschkin.
\newblock {H}aiku: {S}onnet for {JAX}, 2020.
\newblock URL \url{http://github.com/deepmind/dm-haiku}.

\bibitem[Hornik et~al.(1989)Hornik, Stinchcombe, and White]{Hornik1989}
K.~Hornik, M.~Stinchcombe, and H.~White.
\newblock Multilayer feedforward networks are universal approximators.
\newblock \emph{Neural networks}, 1989.

\bibitem[Hottung et~al.(2020)Hottung, Bhandari, and
  Tierney]{hottung2020learning}
A.~Hottung, B.~Bhandari, and K.~Tierney.
\newblock Learning a latent search space for routing problems using variational
  autoencoders.
\newblock In \emph{ICLR}, 2020.

\bibitem[Hu et~al.(2020)Hu, Fey, Zitnik, Dong, Ren, Liu, Catasta, and
  Leskovec]{hu2020open}
W.~Hu, M.~Fey, M.~Zitnik, Y.~Dong, H.~Ren, B.~Liu, M.~Catasta, and J.~Leskovec.
\newblock Open graph benchmark: Datasets for machine learning on graphs.
\newblock \emph{NeurIPS}, 33:\penalty0 22118--22133, 2020.

\bibitem[Hutter(2020)]{Hutter}
M.~Hutter.
\newblock On representing (anti)symmetric functions.
\newblock \emph{arXiv/2007.15298}, 2020.

\bibitem[Kendall et~al.(2020)Kendall, Pantone, Manickavasagam, Bengio, and
  Scellier]{kendall2020training}
J.~Kendall, R.~Pantone, K.~Manickavasagam, Y.~Bengio, and B.~Scellier.
\newblock Training end-to-end analog neural networks with equilibrium
  propagation.
\newblock \emph{arXiv:2006.01981}, 2020.

\bibitem[Kingma and Ba(2014)]{kingma2014adam}
D.~P. Kingma and J.~Ba.
\newblock Adam: A method for stochastic optimization.
\newblock \emph{arXiv:1412.6980}, 2014.

\bibitem[Kipf and Welling(2016)]{kipf2016semi}
T.~N. Kipf and M.~Welling.
\newblock Semi-supervised classification with graph convolutional networks.
\newblock \emph{arXiv:1609.02907}, 2016.

\bibitem[Lake et~al.(2015)Lake, Salakhutdinov, and Tenenbaum]{lake2015human}
B.~M. Lake, R.~Salakhutdinov, and J.~B. Tenenbaum.
\newblock Human-level concept learning through probabilistic program induction.
\newblock \emph{Science}, 350\penalty0 (6266):\penalty0 1332--1338, 2015.

\bibitem[Lee et~al.(2019)Lee, Lee, Kim, Kosiorek, Choi, and Teh]{lee2019set}
J.~Lee, Y.~Lee, J.~Kim, A.~Kosiorek, S.~Choi, and Y.~W. Teh.
\newblock Set transformer: A framework for attention-based
  permutation-invariant neural networks.
\newblock In \emph{ICML}, pages 3744--3753. PMLR, 2019.

\bibitem[Liao et~al.(2018)Liao, Xiong, Fetaya, Zhang, Yoon, Pitkow, Urtasun,
  and Zemel]{liao2018reviving}
R.~Liao, Y.~Xiong, E.~Fetaya, L.~Zhang, K.~Yoon, X.~Pitkow, R.~Urtasun, and
  R.~Zemel.
\newblock Reviving and improving recurrent back-propagation.
\newblock In \emph{ICML}, pages 3082--3091. PMLR, 2018.

\bibitem[Mallat(2016)]{Mallat_2016}
S.~Mallat.
\newblock Understanding deep convolutional networks.
\newblock \emph{Philosophical Transactions of the Royal Society A:
  Mathematical, Physical and Engineering Sciences}, 374, 2016.

\bibitem[Murphy et~al.(2019)Murphy, Srinivasan, Rao, and Ribeiro]{Janossy}
R.~L. Murphy, B.~Srinivasan, V.~Rao, and B.~Ribeiro.
\newblock Janossy pooling: Learning deep permutation-invariant functions for
  variable-size inputs.
\newblock \emph{ICLR}, 2019.

\bibitem[Nesterov(1983)]{nesterov1983method}
Y.~E. Nesterov.
\newblock A method for solving the convex programming problem with convergence
  rate o (1/k\^{} 2).
\newblock In \emph{Dokl. akad. nauk Sssr}, volume 269, pages 543--547, 1983.

\bibitem[Pellegrini et~al.(2020)Pellegrini, Tibo, Frasconi, Passerini, and
  Jaeger]{LAF2020}
G.~Pellegrini, A.~Tibo, P.~Frasconi, A.~Passerini, and M.~Jaeger.
\newblock Learning aggregation functions, 2020.

\bibitem[Pineda(1987)]{pineda1987generalization}
F.~J. Pineda.
\newblock Generalization of back-propagation to recurrent neural networks.
\newblock \emph{Physical review letters}, 59\penalty0 (19):\penalty0 2229,
  1987.

\bibitem[Putzky and Welling(2017)]{putzky2017recurrent}
P.~Putzky and M.~Welling.
\newblock Recurrent inference machines for solving inverse problems.
\newblock \emph{arXiv:1706.04008}, 2017.

\bibitem[Rae et~al.(2019)Rae, Potapenko, Jayakumar, and
  Lillicrap]{rae2019compressive}
J.~W. Rae, A.~Potapenko, S.~M. Jayakumar, and T.~P. Lillicrap.
\newblock Compressive transformers for long-range sequence modelling.
\newblock \emph{arXiv:1911.05507}, 2019.

\bibitem[Rajeswaran et~al.(2019{\natexlab{a}})Rajeswaran, Finn, Kakade, and
  Levine]{MetaImplicit}
A.~Rajeswaran, C.~Finn, S.~M. Kakade, and S.~Levine.
\newblock Meta-learning with implicit gradients.
\newblock \emph{NeurIPS}, 2019{\natexlab{a}}.

\bibitem[Rajeswaran et~al.(2019{\natexlab{b}})Rajeswaran, Finn, Kakade, and
  Levine]{rajeswaran2019meta}
A.~Rajeswaran, C.~Finn, S.~M. Kakade, and S.~Levine.
\newblock Meta-learning with implicit gradients.
\newblock \emph{NeurIPS}, 32, 2019{\natexlab{b}}.

\bibitem[Rubanova et~al.(2021)Rubanova, Sanchez-Gonzalez, Pfaff, and
  Battaglia]{rubanova2021constraint}
Y.~Rubanova, A.~Sanchez-Gonzalez, T.~Pfaff, and P.~Battaglia.
\newblock Constraint-based graph network simulator.
\newblock \emph{arXiv:2112.09161}, 2021.

\bibitem[Song and Ermon(2019)]{song2019generative}
Y.~Song and S.~Ermon.
\newblock Generative modeling by estimating gradients of the data distribution.
\newblock \emph{NeurIPS}, 32, 2019.

\bibitem[Vaswani et~al.(2017)Vaswani, Shazeer, Parmar, Uszkoreit, Jones, Gomez,
  Kaiser, and Polosukhin]{vaswani2017attention}
A.~Vaswani, N.~Shazeer, N.~Parmar, J.~Uszkoreit, L.~Jones, A.~N. Gomez,
  {\L}.~Kaiser, and I.~Polosukhin.
\newblock Attention is all you need.
\newblock In \emph{NeurIPS}, pages 5998--6008, 2017.

\bibitem[Velickovic et~al.(2018)Velickovic, Cucurull, Casanova, Romero, Liò,
  and Bengio]{Velickovic2018}
P.~Velickovic, G.~Cucurull, A.~Casanova, A.~Romero, P.~Liò, and Y.~Bengio.
\newblock Graph attention networks, 2018.

\bibitem[Vinyals et~al.(2015)Vinyals, Bengio, and Kudlur]{vinyals2015order}
O.~Vinyals, S.~Bengio, and M.~Kudlur.
\newblock Order matters: Sequence to sequence for sets.
\newblock \emph{arXiv:1511.06391}, 2015.

\bibitem[Wagstaff et~al.(2019)Wagstaff, Fuchs, Engelcke, Posner, and
  Osborne]{Wagstaff2019}
E.~Wagstaff, F.~B. Fuchs, M.~Engelcke, I.~Posner, and M.~A. Osborne.
\newblock On the limitations of representing functions on sets.
\newblock \emph{ICML}, 2019.

\bibitem[Wagstaff et~al.(2021)Wagstaff, Fuchs, Engelcke, Osborne, and
  Posner]{Wagstaff2021}
E.~Wagstaff, F.~B. Fuchs, M.~Engelcke, M.~A. Osborne, and I.~Posner.
\newblock Universal approximation of functions on sets.
\newblock \emph{ArXiv}, 2021.

\bibitem[Weiler et~al.(2018)Weiler, Geiger, Welling, Boomsma, and
  Cohen]{WeilerGWBC18}
M.~Weiler, M.~Geiger, M.~Welling, W.~Boomsma, and T.~Cohen.
\newblock 3d steerable cnns: Learning rotationally equivariant features in
  volumetric data.
\newblock In \emph{NeurIPS}, 2018.

\bibitem[Winkels and Cohen(2018)]{Winkels2018}
M.~Winkels and T.~S. Cohen.
\newblock 3d g-cnns for pulmonary nodule detection.
\newblock \emph{NeurIPS}, 2018.

\bibitem[Worrall et~al.(2017)Worrall, Garbin, Turmukhambetov, and
  Brostow]{Worrall2017}
D.~E. Worrall, S.~J. Garbin, D.~Turmukhambetov, and G.~J. Brostow.
\newblock Harmonic networks: Deep translation and rotation equivariance.
\newblock \emph{CVPR}, 2017.

\bibitem[Xu et~al.(2018)Xu, Hu, Leskovec, and Jegelka]{xu2018powerful}
K.~Xu, W.~Hu, J.~Leskovec, and S.~Jegelka.
\newblock How powerful are graph neural networks?
\newblock \emph{arXiv:1810.00826}, 2018.

\bibitem[Yuan et~al.(2020)Yuan, Yan, Sonka, and Yang]{yuan2020large}
Z.~Yuan, Y.~Yan, M.~Sonka, and T.~Yang.
\newblock Large-scale robust deep auc maximization: A new surrogate loss and
  empirical studies on medical image classification.
\newblock \emph{arXiv:2012.03173}, 2020.

\bibitem[Zaheer et~al.(2017)Zaheer, Kottur, Ravanbakhsh, P{\'{o}}czos,
  Salakhutdinov, and Smola]{DeepSets}
M.~Zaheer, S.~Kottur, S.~Ravanbakhsh, B.~P{\'{o}}czos, R.~Salakhutdinov, and
  A.~J. Smola.
\newblock Deep sets.
\newblock \emph{NeurIPS}, 2017.

\end{thebibliography}

\appendix

\section{Equilibrium aggregation as MAP inference}

Here we provide another useful perspective on Equilibrium Aggregation which is connecting the method to prior work in Bayesian inference and continuing one of the arguments made by~\cite{DeepSets}.

Consider a joint distribution over a sequence of random variables $\mathbf{x}_1, \mathbf{x}_2, \ldots$.
The sequence is called infinitely exchangeable if, for any $N$ the joint probability $p(\mathbf{x}_1, \mathbf{x}_2, \ldots, \mathbf{x}_N)$ is invariant to permutation of the indices. 
Formally speaking, for any permutation over indices $\pi$ we have
$$
    p(\mathbf{x}_1, \mathbf{x}_2, \ldots, \mathbf{x}_N) = p(\mathbf{x}_{\pi(1)}, \mathbf{x}_{\pi(2)}, \ldots, \mathbf{x}_{\pi(N)}).
$$

According to De Finetti's theorem (see, for example, \citep{diaconis1987dozen}), the sequence $\mathbf{x}_1, \mathbf{x}_2, \ldots$ is infinitely exchangeable iff, for all $N$, it admits the following mixture-style decomposition:
$$
    p(\mathbf{x}_1, \mathbf{x}_2, \ldots, \mathbf{x}_N) = \int \prod_{i=1}^N p(\mathbf{x}_i | \mathbf{y}) p(\mathbf{y}) d\mathbf{y}.
$$
Since the existence of this model for exchangeable sequences is guaranteed, one can consider the posterior distribution $p(\mathbf{y} | \mathbf{x}_1, \mathbf{x}_2, \ldots, \mathbf{x}_N)$ which effectively \emph{encodes} all global information about the observed inputs.

Since full and exact posterior inference is often infeasible (and the theorem does not guarantee at all that the prior $p(\mathbf{y})$ and the likelihood $p(\mathbf{x} | \mathbf{y})$ are conjugate or otherwise admit closed-form inference), in practice \emph{maximum a posteriori probability (MAP)} estimates are used when a point estimate is sufficient:
\begin{align}
    \hat{\mathbf{y}} &= \arg \max_{\mathbf{y}} \log p(\mathbf{y} | \mathbf{x}_1, \ldots, \mathbf{x}_N) \nonumber \\
    &= \arg \max_{\mathbf{y}} \left[ \sum_{i=1}^N \underbrace{\log p(\mathbf{x}_i | \mathbf{y})}_{=-F(\mathbf{x}_i, \mathbf{y})} + \underbrace{\log p(\mathbf{y})}_{=-R(\mathbf{y})} \right]. \label{eq:MAP}
\end{align}

Informally speaking, this means that MAP encoding of sets under a probabilistic model with a global hidden variable (which must exists albeit potentially in a complicated form) amounts to the optimization problem~\eqref{eq:MAP} which is almost the same as the Equilibrium Aggregation formulation~\eqref{eq:aggregation_optimization}.
Allowing the potential $F(\mathbf{x}, \mathbf{y})$ to be a flexible neural network, it is possible to recover the desired negative log-likelihood $-\log p(\mathbf{x} | \mathbf{y})$ (up to an additive constant).

This observation provides an additional theoretical argument in support of Equilibrium Aggregation and also suggests a number of interesting extensions one can imagine by further exploring the vast toolset of probabilistic inference.

\section{Attention as equilibrium aggregation}
\label{sec:appendix_attention}
We have already outlined how simple pooling methods can be recovered as special cases of Equilibrium Aggregation.
Here, we demonstrate how Equilibrium Aggregtaion can learn to model the popular attention mechanism.

We denote the interaction or query vector as $\mathbf{h}$. Note that we consider many-to-one aggregation and therefore only have one query vector. Here, the query vector is learned and independent of the input set.
For brevity, we will ignore the commonly used distinction between keys and values over which the attention is computed and will simply consider a set of vectors $X = \{ \mathbf{x}_i \}_{i=1}^N$ serving as both.
Now, we split the aggregation result as $\mathbf{y} = [\mathbf{y}_r, y_s]$ and define the potential function as follows:
\begin{equation*}
    F(\mathbf{x}, \mathbf{y}) = \exp(\mathbf{h}^T \mathbf{x}) || \mathbf{x} - \mathbf{y}_r ||_2^2 + (y_s - \exp(\mathbf{h}^T \mathbf{x}))^2.
\end{equation*}
Assuming no prior, the optimization problem~\eqref{eq:equilibrium_aggregation} would then lead to the following solution:
$$
    \mathbf{y}_r = {1 \over N} \sum_{i=1}^N \exp(\mathbf{h}^T \mathbf{x}_i) \mathbf{x}_i, \quad y_s = {1 \over N} \sum_{i=1}^N \exp(\mathbf{h}^T \mathbf{x}_i),
$$
from which the normalized result can be recovered trivially as 
$$
{\mathbf{y}_r \over y_s} = \sum_{i=1}^N \frac{\exp(\mathbf{h}^T \mathbf{x}_i)}{\sum_{j=1}^N \exp(\mathbf{h}^T \mathbf{x}_j)} \mathbf{x}_i.
$$

\section{Practical implementation of Equilibrium Aggregation}

While we generally found Equilibrium Aggregation to be robust to various aspects of implementation, in this appendix we share the best practices discovered in our experiments.

\subsection{Potential function}

The potential function $F(\mathbf{x}, \mathbf{y})$ in experiments has been implemented as a two-layer ResNet with tanh activations, layer normalization~\citep{ba2016layer} and, importantly,  sum-of-the-squares output. The Jax implementation can be found in Listing~\ref{code:potential}.

tanh activations and layer normalization ensured numerically stable gradients with respect to $\mathbf{y}$.
At the same time, sum of the squares allowed the potential to exhibit more rich behaviour, especially when all of the potentials are summed in the total energy.

\subsection{Scaled energy}

The number of elements in the set $N$ may vary significantly across different data points in a dataset which ultimately would make it difficult to set the single optimization schedule (learning rate and momentum) that would work equally well for all values of $N$. 
This is because energy~\eqref{eq:aggregation_optimization} is a sum over all elements in the set and so the gradient  $\nabla_{\mathbf{y}} E(X, \mathbf{y})$ is scaled linearly with $N$. 

A potential solution to this problem would be to simply average the potentials instead of summing them, but this would make it very difficult if not impossible to reason about the number of elements in the set from $\mathbf{y}$.
Thus, we use a different solution where we still scale the energy so that it does increase in magnitude as $N$ grows but does so at a sublinear rate:
\begin{equation}\label{eq:scaled_energy}
    E(X, \mathbf{y}) = {R(\mathbf{y}) + \sum_{i=1}^N F(\mathbf{x}_i, \mathbf{y}) \over (N + \epsilon)} \log_2 (N + 1),
\end{equation}
where $\epsilon = 10^{-8}$ is a small constant to prevent division by zero in the case of an empty set.

\subsection{Initialization}

In all experiments $\mathbf{y}^{(0)}$ has been set to a zero vector which, as we found, facilitated faster training. 

\subsection{Inner-loop optimization algorithm}

We used gradient descent with Nesterov-accelerated momentum~\citep{nesterov1983method} as an algorithm for optimizing~\eqref{eq:equilibrium_aggregation}.
We provide the full code in Listing~\ref{code:optimizer}.

\begin{figure}
    \centering
    \includegraphics[width=\linewidth]{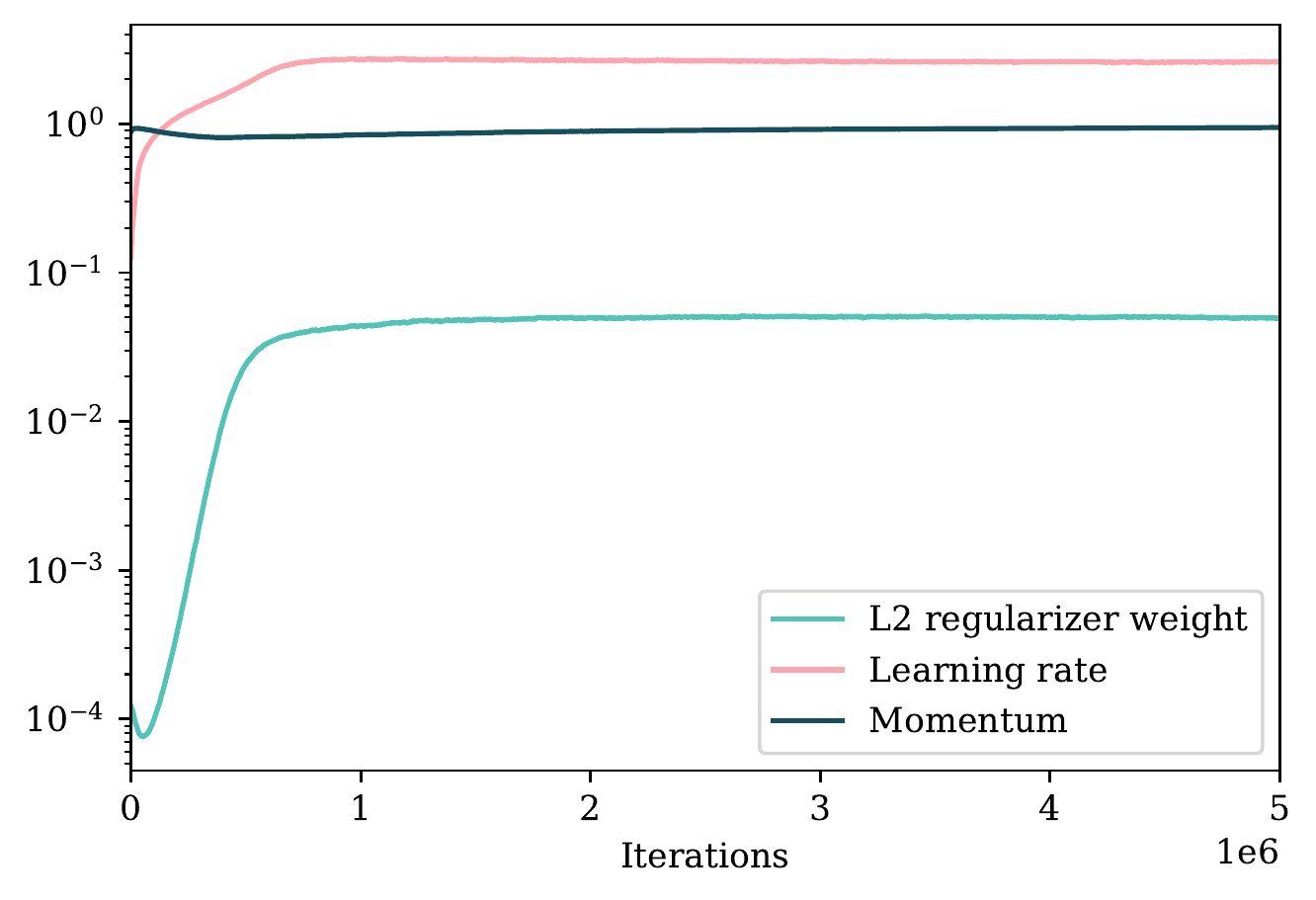}
    \caption{Evolution of various trainable parameters of the inner-loop optimizer.}
    \label{fig:optimizer_stats}
\end{figure}

Figure~\ref{fig:optimizer_stats} shows the evolution of the trainable learning rate and momentum parameters of the optimizer on the MOLPCBA-GIN experiment, as well as the regularization weight.  
One can see that all three parameters largely stabilize after first $10^6$ training steps.

\subsection{Implicit differentiation}

In the course of this work we briefly explored the possibility of employing implicit differentiation. 
However, in this regime it is not trivial to allow e.g. the learning rate to be trained together with the model end-to-end and we found it difficult to propose an optimization schedule that would work well in all phases of training. 
Larger step sizes led to unstable training and smaller step sizes required too many iterations to converge making implicit differentiation less efficient computationally than the straightforward explicit differentiation which we ended up using for all the experiments.

\section{Further experimental details}

\subsection{Median Estimation}
\label{sec:median_details}

\paragraph{Data Creation} The data is created indefinitely on the fly. For each sample, first, one of three probability distributions is selected by chance: \textit{uniform} (between $0$ and $1$), \textit{gamma} (scale $0.2$, shape $0.5$), or \textit{normal} (mean $0.5$, standard deviation $0.4$). Then, 100 values are randomly drawn from the selected distribution. The label is the median value of the set of these 100 values.

\paragraph{Evaluation} For \textit{average performance} (bold lines in Fig. \ref{fig:exp_toy}), we average across seeds, do exponential smoothing and report the performance after 10 million training steps. Equilibrium Aggregation is roughly one order of magnitude better. For \textit{best performing seed} (faded lines in Fig. \ref{fig:exp_toy}), we report the best performing evaluation step (each evaluation step uses 80000 samples) across all seeds.

\subsection{MOLPCBA}

As mentioned in the main text, we performed a brief hyperparameter search for the weight of the $L_{\text{aux}}$~\eqref{eq:aux_loss}.
Based on these results, we proceeded with the weight of $1$ with both of the architectures.
We did not optimize this hyperparameter for local aggregation and simply used the value of $10^{-4}$ as in the rest of the experiments.
Both local and global aggregations used 15 iterations of energy minimization.

\section{Ablation Studies}
We performed several ablation studies that we hope add helpful context.

\subsection{Number of Gradient Steps \& Performance}

The model takes gradient steps to find the minimum of the energy function in the aggregation operator. More gradient steps should help find a more accurate approximation of the minimum and could therefore be expected to increase overall model performance. The following is an ablation on MOLPCBA + GCN + EA showing how the number of gradient steps influences the performance:
\newline \newline
\begin{tabular}{l l}
    \vspace{1mm}
    \bf \# Gradient Steps & \bf Best Valid. Performance \\
    \midrule
    $1$ & $0.235$\\
    $2$ & $ 0.257$\\
    $5$ & $0.263$\\
    $10$ & $0.268$\\
    \vspace{2mm}
\end{tabular}

This shows increasing performance with increasing number of steps, with an expected levelling-off at higher step numbers.

\subsection{Compute Time \& Number of Gradient Steps}
The performance benefits of additional gradient steps observed above raise the question of how high their computational cost is.
In the following, we measure how much time it takes for different networks with the same number of embeddings and layers to complete 2 million training steps on MOLPCBA:
\newline \newline
\begin{tabular}{l l}
    \vspace{1mm}
    \bf Method & \bf Time \\
    \midrule
    Sum/Deep Sets & 5h30min \\
    EA with 2 gradient steps & 7h50min \\
    EA with 5 gradient steps & 10h8min \\
    EA with 10 gradient steps & 15h44min \\
    \vspace{2mm}
\end{tabular}

We see two research directions for increasing the speed of EA: 1) Exploiting the implicit function theorem. 2) Using less gradient steps during training than at test time.

\subsection{Auxiliary Loss \& Performance}
Here, we examine the influence of the weighting of the auxiliary loss in (\ref{eq:aux_loss}) on the performance. We found this loss  to be generally helpful for performance. It encourages the network to find a minimum as tracked by the norm of the final gradient step in figure 5. This is an ablation study on MOLPCBA + GIN + EA:
\newline \newline
\begin{tabular}{l l}
    \vspace{1mm}
    \bf Auxiliary Loss Weight & \bf Best Valid. Performance \\
    \midrule
    $10^{-4}$ & $0.250$\\
    $10^{-3}$ & $ 0.261$\\
    $10^{-2}$ & $0.257$\\
    $10^{-1}$ & $0.254$\\
    $1$ & $0.263$\\
    \vspace{2mm}
\end{tabular}

This shows a relatively stable behavior across different loss weightings, with higher weightings leading to slightly better performance on average.

\subsection{Capacity of EA \& Performance}
Furthermore, we provide an ablation on MOLPCBA + GCN + EA where the first column specifies the relative number of embeddings in the energy function compared to the one in Section \ref{sec:molpcba} (number of weights roughly scales quadratically with that). We made the rest of the graph network smaller to reduce the computational cost, hence the scores are overall lower.
\newline \newline
\begin{tabular}{l l}
    \vspace{1mm}
    \bf Embeddings in Energy Function & \bf Best Valid. \\
    \midrule
    $10\%$ & $0.208$\\
    $30\%$ & $ 0.218$\\
    $60\%$ & $0.228$\\
    $100\%$ & $0.233$\\
    $130\%$ & $0.222$\\
    \vspace{2mm}
\end{tabular}

This shows a drop in performance when going to 30\% and 10\% of the original network capacity. For larger capacities, the performance differences seem less significant.

\lstinputlisting[language=Python, caption=Potential function implementation in Jax., label={code:potential}, float=*]{code/potential.py}

\lstinputlisting[language=Python, caption=Optimizer code in Jax., label={code:optimizer}, float=*]{code/optimizer.py}

\end{document}